\documentclass[conference]{IEEEtran}
\IEEEoverridecommandlockouts
\usepackage[T1]{fontenc}
\usepackage[utf8]{inputenc}
\usepackage{times}
\usepackage{microtype}
\usepackage{inconsolata}
\usepackage{graphicx}
\usepackage{booktabs}
\usepackage{amsmath,amssymb,amsthm}
\usepackage{xcolor}
\usepackage[numbers,sort&compress]{natbib}
\usepackage{url}
\usepackage{xurl}
\usepackage[hidelinks]{hyperref}
\newtheorem{proposition}{Proposition}
\makeatletter\renewcommand{\paragraph}[1]{\par\noindent\textbf{#1}\hspace{0.5em}}\makeatother

\renewenvironment{abstract}{\par\noindent\footnotesize\bfseries\textit{Abstract.}\ }{\par\vspace{0.6\baselineskip}}
\newcommand{\pL}{\texttt{\textbackslash p\{L\}}}
\newcommand{\pM}{\texttt{\textbackslash p\{M\}}}
\newcommand{\RO}{\textsc{R0}}
\newcommand{\RI}{\textsc{R1}}
\newcommand{\RII}{\textsc{R2}}
\newcommand{\artifactnote}{Code, results and the analysis plan: \url{https://github.com/arjanchaudharyy/nepali-pretokenizer-repair}. Tokenizers (\RII{}, then \RI{}): \url{https://huggingface.co/Aarjan/Llama-3.2-Nepali-Repaired-Tokenizer-32k}, \url{https://huggingface.co/Aarjan/Qwen3-Nepali-Repaired-Tokenizer-32k}, \url{https://huggingface.co/Aarjan/Llama-3.2-Nepali-Extended-Tokenizer-32k}, \url{https://huggingface.co/Aarjan/Qwen3-Nepali-Extended-Tokenizer-32k}.}
\newcommand{\commitA}{\texttt{7713fd2}}\newcommand{\commitB}{\texttt{5430e86}}\newcommand{\commitC}{\texttt{aa8779a}}\newcommand{\commitD}{\texttt{9be7667}}\newcommand{\commitE}{\texttt{660fd50}}\newcommand{\commitF}{\texttt{9fb12e7}}\newcommand{\commitG}{\texttt{c8bc301}}\newcommand{\commitH}{\texttt{4bc56cc}}\newcommand{\tzlabel}{ (IST)}\newcommand{\commitI}{\texttt{6dddbeb}}

\begin{document}

\title{Merge-Based Vocabulary Extension Stalls at the Pre-Tokenizer: An English-Preserving Repair for Nepali, Tested in Continued Pretraining}

\author{
\IEEEauthorblockN{Aarjan Chaudhary}
\IEEEauthorblockA{\textit{Naamche Labs}}
}

\maketitle

\begin{abstract}
Llama-3, Qwen3 and many other open LLMs pre-tokenize text with a regular expression whose word class admits only Unicode letters. Devanagari vowel signs are combining marks, so this regex splits Nepali words at every vowel sign, and models that use it spend $2.6$ to $5.8$ times as many tokens on Nepali as on the same English text. Merge-based vocabulary extension cannot join the split pieces: for Llama-3 and Qwen3 no number of new merges brings this premium below about $2.2$, and ten released extensions already end within 4\% of this pre-token floor. We give a small edit to the deployed regex that provably keeps every token id of text without combining marks or Devanagari characters. With 32{,}000 new merges it brings Nepali to $1.01$ times English, about 54\% fewer Nepali tokens than standard extension. In small-budget continued pretraining of Llama-3.2-1B and Qwen3-0.6B, the original tokenizer gave the lowest held-out Nepali bits per byte under every recipe we tested. Matching the repair's training steps to standard extension, with about 35\% of the data repeated, reversed their order on Qwen but not on Llama. Whether the token savings give a favourable latency and quality trade-off in generation remains untested. We release the tokenizers and code.
\end{abstract}

\section{Introduction}

On parallel FLORES-200 text, Qwen3's tokenizer needs $4.42$ times as many tokens for a Nepali sentence as for its English translation, and the \texttt{cl100k} tokenizer used by Phi-4, OLMo-2 and Granite-4.1 needs $4.76$ times as many (\S\ref{sec:audit}). We call this ratio the \emph{premium}. Nepali speakers pay it on every request, in price, latency and usable context.

Much of the premium has a mechanical cause. GPT-2-style pre-tokenizers split text with a regular expression whose word class \pL{} matches one or more Unicode letters~\citep{gpt2encoder}; we call it the \emph{letters-only regex}. In Unicode, Devanagari vowel signs, the virama, the anusvara and the candrabindu are combining marks (\pM{}), so the regex starts a new pre-token at each of them and byte-pair encoding (BPE) can never merge across that boundary. \citet{regmi2026vowel} formalised this cause and the resulting bound on token counts. They showed that adding marks to the word class cuts Nepali from 4.78 to 1.58 tokens per word, and that 63\% of the 1{,}345 text-generation repositories they classified still use the letters-only class.

Regmi et al.\ studied tokenizers trained from scratch. Most practitioners instead adapt a model that already exists, usually by vocabulary extension: they add target-language tokens, initialise their embeddings and continue pretraining~\citep{tejaswi-etal-2024-exploring,yamaguchi2026lowres,purason-etal-2026-teaching,smith2026inplace}. We study that deployed-model setting. We show that the bound becomes a floor that released merge-based extensions already hit; we give a repair of a deployed model's regex that provably keeps English token ids; and we test it in controlled continued pretraining against the standard continued-BPE recipe of \citet{purason-etal-2026-teaching}, with the tokenizers released.

\paragraph{Findings.} For Llama-3 and Qwen3 the \emph{pre-token floor}, the lowest premium any merge-based extension can reach, is about $2.2$ (\S\ref{sec:floor}). Ten released extensions for six scripts end within 4\% of their floors, and our own extension under the letters-only regex stalls at it. Registering our repair's learned strings as added tokens~\citep[cf.][]{churchill2026reducing} gets below the floor, but a fifth of the matches then start mid-word. Our repair adds marks to the word class with two string substitutions and, with new merges, brings Nepali to parity with English while leaving every token id of mark-free, non-Devanagari text unchanged (\S\ref{sec:method}). The model-quality result is less favourable. Continued pretraining with no new tokens gave the best held-out Nepali bits per byte on both models, so our main hypothesis failed. At equal bytes the repair is worse than standard extension ($0.025$ bits per byte on Llama and $0.012$ on Qwen, two-seed means), but it takes 26\% fewer steps; at matched steps it is $0.010$ better on Qwen and $0.012$ worse on Llama, and a higher learning rate on Qwen improves all three arms' Nepali scores without changing their order, at a cost on English (\S\ref{sec:cpt}). For the same Nepali text the repair needs 54\% fewer tokens than standard extension; whether this speeds up generation of equally good text is untested.

\paragraph{Contributions.} We make four contributions:
\begin{enumerate}
  \setlength\itemsep{0pt}
  \item Evidence that the pre-token floor binds released merge-based extensions, and a test of the added-token route around it (\S\ref{sec:floor}); an audit of 24 distinct deployed tokenizers with per-tokenizer floors (\S\ref{sec:audit}).
  \item A repair of a deployed regex that provably keeps English token ids (not model behaviour), a Devanagari-only variant with a stronger guarantee (not trained), and a practical pitfall in loading repaired tokenizers (\S\ref{sec:method}).
  \item A controlled comparison of \RO{} (no extension), \RI{} (standard extension) and \RII{} (repair + extension) in continued pretraining of two deployed models, reported against hypotheses written down in advance, including the one that failed (\S\ref{sec:cpt}).
  \item The tokenizers, code and analysis plan.\footnote{\artifactnote}
\end{enumerate}

\section{Related work}
\label{sec:related}

\paragraph{Tokenization premiums.} Tokenizers trained mostly on English encode other languages in more tokens, so those languages pay more for the same content~\citep{rust-etal-2021-good,petrov2023unfairness,ahia-etal-2023-languages,lundin-etal-2026-token}. For Nepali, \citet{kadariya2026nepali} reports $4.79$ for \texttt{cl100k}, and \citet{srivastava2026tokenizertax} finds that \texttt{o200k} cuts the mean Indic premium from $8.0$ to $2.1$ without attributing the change to the regex. We measure sentence-level token ratios on parallel FLORES-200 text~\citep{goyal-etal-2022-flores,nllb2022}, following \citet{petrov2023unfairness}. Fewer tokens do not guarantee a better model~\citep{schmidt-etal-2024-tokenization,zouhar-etal-2023-tokenization,luitel2025perplexity}, so we also test the repair with continued pretraining.

\paragraph{Pre-tokenization of combining marks.} A 2024 tiktoken issue reported that GPT-2-style regexes split Indic words at vowel signs and proposed the class \texttt{[\textbackslash p\{L\}\textbackslash p\{M\}]+}~\citep{tiktoken_issue292}. ZeTT~\citep{minixhofer2024zett} adjusted its regex to avoid over-segmenting Hindi and Tamil, \citet{velayuthan-sarveswaran-2025-egalitarian} showed that the GPT-2, GPT-4 and Llama-3 pre-tokenizers break Tamil, Sinhala and Hindi, and \citet{rana2025mutant} cut Hindi fertility from 3.47 to 1.36 tokens per word by swapping the GPT-2 regex for Llama-4's; see also \citet{thakur2025artofbreaking}. \citet{regmi2026vowel} formalised the cause, stated the bound across 26 languages, and showed with 268M-parameter models trained from scratch that the wider class lowers Nepali bits per byte by 4.4\%. Qwen3.5's released tokenizer uses a mark-aware class~\citep{qwen35tokenizer}, a change also noted by \citet{land2026smallbugs}, and \citet{shrestha2026papaya} study morphology-guided pre-tokenization for Nepali in models trained from scratch. The edit itself is therefore known. What has not been studied is applying it to an already trained model while keeping its English tokenization.

\paragraph{Adapting the vocabulary of a pretrained model.} Vocabulary extension adds tokens, initialises their embeddings from existing ones~\citep{gee-etal-2022-fast,minixhofer-etal-2022-wechsel,dobler-de-melo-2023-focus,liu-etal-2024-ofa,remy2024transtokenization} and continues pretraining~\citep{cui2023chinesellama,tejaswi-etal-2024-exploring,kim2024eeve,csaki2024sambalingo,nag2024efficientcpt}; \citet{duwal2024nepalidapt} continue pretraining Llama-3-8B on Nepali with 4-bit QLoRA and without new tokens. Continued BPE appends new merges to a byte-level model's merge list; \citet{purason-etal-2026-teaching} introduce it and report tokenizer-only compression for many languages, Nepali included, under the unchanged regex. \citet{yamaguchi2026lowres} and \citet{yamaguchi2025elchat} extend Llama-3 and Qwen models for several abugidas, \citet{smith2026inplace} expand LFM2 and \citet{mehta2026defragmenting} Qwen3. \citet{didenko2026writing} adapts models to Ukrainian at fixed vocabulary size, keeping English token counts within 0.05\%. All of these keep the original pre-tokenizer. Tokenizer transfer~\citep{minixhofer2024zett,minixhofer2025alm,dobler2024tokenizerswap}, dynamic tokenization~\citep{feher2024dynamic} and grapheme- or akshara-aware pre-tokenizers~\citep{kapila2026typedriven,darshana2026wwho} replace more of the tokenizer. \citet{dagan2024getting} replace the tokenizer of a pretrained model, regex included, during code fine-tuning and find it recovers after about 50B tokens in their setting; \citet{rana2025mutant} likewise give Llama-3.2-1B a new tokenizer with Llama-4's mark-aware regex and train its embedding and output layers on 40B tokens. Both alter English tokenization; ours does not.

The closest prior system is TituLLM~\citep{nahin-etal-2025-titullms}, which extends Llama-3.2 for Bengali; its released v2.0 tokenizer uses a \texttt{\textbackslash w}-based regex that keeps vowel signs attached. It does not identify the regex as a cause and has no regex-only or vocabulary-only arm, and its regex also changes digit and whitespace pre-tokens in English. We add a controlled attribution on deployed models and a repair that leaves every input without marks or Devanagari characters unchanged.

\section{Pre-tokens bound merge-based extension}
\label{sec:floor}

A byte-level BPE tokenizer~\citep{sennrich-etal-2016-neural,radford2019language} first splits a string $s$ with a regex into pre-tokens $P(s)$ and then encodes each pre-token on its own by applying merges in rank order. Every pre-token yields at least one token, so $|T(s)| \ge |P(s)|$; \citet{regmi2026vowel} formalise this as a training-free lower bound on fertility. Appending merges, whatever they are and however many, changes neither the normalizer nor the regex, so the bound holds for every tokenizer extended this way.

We turn this into a bound on the premium. For a tokenizer $T$ with pre-tokenizer $P$, the \emph{pre-token floor} is $|P(\text{ne})|/|T(\text{en})|$, the number of Nepali pre-tokens divided by the number of English tokens, summed over the 1{,}012 FLORES-200 devtest sentences. A merge-based extension that keeps $P$ and leaves English unchanged cannot bring the premium below this value, though it may stop above it. Our standard extension \RI{} leaves English unchanged: every new merge produces a token that contains a complete Devanagari code point, and since UTF-8 is self-synchronising, no new merge can fire on text without one (the same argument appears in the proof of Proposition~\ref{prop:lossless}, Appendix~\ref{app:proof}).

For today's regexes the floor is far from slack. In Llama-3 and Qwen the word branch is \texttt{[\textasciicircum\textbackslash r\textbackslash n\textbackslash p\{L\}\textbackslash p\{N\}]?\textbackslash p\{L\}+}, a run of letters optionally preceded by one other character. A Devanagari vowel sign ends the current run and becomes the leading character of the next one, so \emph{nep\=alko sark\=ar} (``government of Nepal'') becomes six pre-tokens, four of which start with a vowel sign that belongs to the previous consonant (Figure~\ref{fig:split}, Appendix~\ref{app:extras}). The floor is $2.19$ for Llama-3 and $2.17$ for Qwen3. That is above the premium of \texttt{o200k} ($1.61$), and 2.7 to 2.8 times the floor under the repaired regex of \S\ref{sec:method} ($0.79$ and $0.80$). Llama-3's premium is already $2.58$, so standard extension can lower it by at most 15\%.

\paragraph{Released extensions end at the floor.} Figure~\ref{fig:released} measures released merge-based extensions of letters-only models; Appendix~\ref{app:released} lists every number and how we found them. Ten extensions by Yamaguchi et al.\ of Llama-3, Llama-3.1, Qwen2.5 and Qwen3~\citep{yamaguchi2026lowres,yamaguchi2025elchat} cover six scripts written with combining marks. All ten end within 4\% of their floor, having closed at least 98\% of the distance from the base tokenizer.\footnote{These extensions re-order the base merges and change English token counts slightly (+6.5\% for the Llama-3 Telugu release), so for them we compare tokens with pre-tokens of the target text directly.} The 61k-token in-place expansion of LFM2-8B-A1B into LFM2.5-8B-A1B~\citep{smith2026inplace} ends further from its floor ($1.07$ times it for Hindi, $1.16$ for Nepali, $1.24$ for Bengali and $1.82$ for Thai, which is written without spaces), but its floors are just as high. Under the repaired regex the same text has 2.2 to 5.9 times fewer pre-tokens, so each of these extensions stopped well short of what new vocabulary could achieve. Yamaguchi et al.'s Amharic extension, for a script without combining marks, stays at $1.57$ times its floor, and the repair does not change that floor.

\begin{figure}[t]
\centering
\includegraphics[width=0.95\columnwidth]{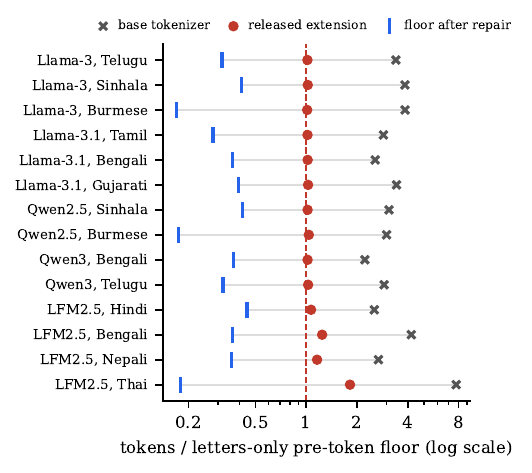}
\caption{Released merge-based extensions of letters-only models on FLORES-200 devtest of each target language, relative to the letters-only pre-token floor. The ten extensions by Yamaguchi et al.\ (top) move from the base tokenizer ($\times$) to within 4\% of the floor (\textcolor{red}{$\bullet$}). LFM2.5 (bottom) stops further above it in all four languages. The blue tick is the floor under the repaired regex. Full numbers, including the Amharic control, are in Table~\ref{tab:released}.}
\label{fig:released}
\end{figure}

\paragraph{Added tokens bypass the floor.} Strings registered as Hugging Face \emph{added tokens} are matched in the raw text before the regex runs, so the bound does not apply to them. SinLlama~\citep{aravinda2025sinllama} extends Llama-3 for Sinhala this way and goes below its letters-only floor. To test this route fairly, we registered as added tokens the 31{,}753 whole Devanagari strings among the 32{,}000 learned for \RII{} (\S\ref{sec:method}), on the unchanged regex. Nepali then costs $1.025$ times English on Llama-3 and $1.009$ on Qwen3, close to \RII{}'s $1.014$, and no FLORES-200 sentence without a Devanagari character changes its token ids. The floor therefore binds merge-based extension only. The price is the segmentation. Added tokens match the longest string anywhere, ignoring word boundaries, and 21\% of the matches start inside a word and 28\% end inside one. Allowing only whole-word matches (\texttt{single\_word=True}) removes those splits and also the gain: the premium returns to $2.49$ on Llama-3 and $4.22$ on Qwen3 (Appendix~\ref{app:extras}). We did not train models with added tokens.

\section{Audit of deployed tokenizers}
\label{sec:audit}

We measured 33 released tokenizers on FLORES-200 devtest. Nine produce token ids identical to another tokenizer's on every sentence of five languages we checked (for example, Phi-4, OLMo-2 and Granite-4.1 match \texttt{cl100k}, and Qwen3 matches Qwen2.5), which leaves 24 distinct tokenizers (Figure~\ref{fig:audit}, Appendix~\ref{app:audit}). We classify each by its own pre-tokenizer configuration, since models in one family do not always share a regex. Four SentencePiece and WordPiece tokenizers (XLM-R, mT5, NLLB-200, IndicBERTv2) do not round-trip their input, so their low premiums are not comparable; we mark them and exclude them from comparisons.

\begin{figure}[t]
\centering
\includegraphics[width=0.95\columnwidth]{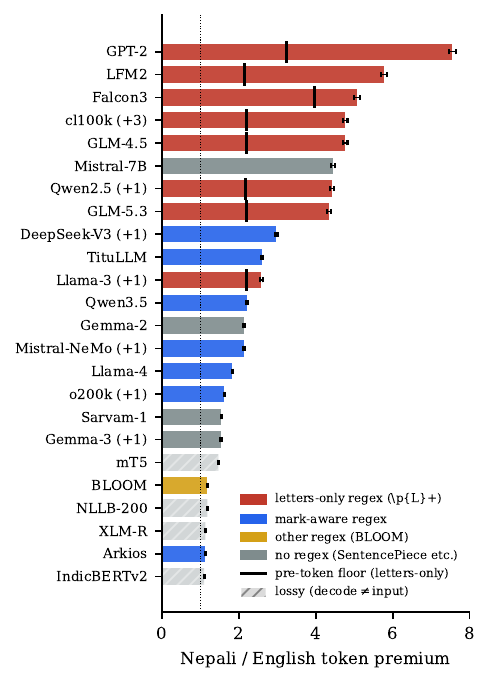}
\caption{Nepali/English premium of 24 distinct tokenizers (33 released; ``+$n$'' counts aliases) on FLORES-200 devtest, with 95\% paired-bootstrap intervals. Black ticks mark the pre-token floor of each letters-only tokenizer: no merge-based extension can move its bar to the left of the tick. Dotted line: parity with English.}
\label{fig:audit}
\end{figure}

Eight distinct tokenizers (13 released models, including GLM-5.3 and LFM2) use the letters-only class, and none of them can be extended below $2.15$. Leaving aside GPT-2 ($7.55$), which we include only for reference, their premiums range from $2.58$ (Llama-3) to $5.77$ (LFM2). The seven lossless tokenizers with a mark-aware regex range from $1.12$ (Arkios~\citep{regmi2026arkios}) to $2.97$ (DeepSeek-V3).

We split each premium into three factors (Appendix~\ref{app:decomp}). The base term, $0.77$ to $0.80$ for every tokenizer, is what the premium would be if every Nepali pre-token under the repaired regex were a single token. The regex factor, $2.77$ to $2.79$ for the Llama-3, Qwen and \texttt{cl100k} pattern, counts the extra pre-tokens the tokenizer's own regex creates. The vocabulary factor, tokens per pre-token, is the only one that merge-based extension can reduce, and it cannot go below 1. For Llama-3 it is already $1.18$, so most of the remaining premium sits in the regex factor. The factors are an accounting identity. The regex factor measures room for improvement, and repairing the regex alone does not lower the premium (\S\ref{sec:compression}). Among mark-aware tokenizers the vocabulary factor still ranges from $1.41$ (Arkios) to $3.71$ (DeepSeek-V3), so a good regex also needs Nepali merges. Within families, the Nepali premium halves from Qwen3 to Qwen3.5 ($4.42$ to $2.20$), whose word branch is close to our repaired class~\citep{qwen35tokenizer}, and falls from $2.58$ to $1.83$ from Llama-3 to Llama-4, which adopts \texttt{o200k}'s pattern. Both vocabularies also grew, so these changes are consistent with our account without isolating the regex.

A well-formed Devanagari syllable never starts with a combining mark, yet 55\% of Llama-3's Nepali tokens do. Even \texttt{o200k}'s mark-aware regex leaves 26\%, because BPE also splits inside words.

\section{Repairing a deployed tokenizer}
\label{sec:method}

The repair has three parts. We edit the regex, learn new merges on top of the existing ones, and initialise the new embeddings so that the model behaves exactly as before on old tokens.

\paragraph{Regex edit.} We add \pM{} to the word class and exclude it from the optional leading character and from the punctuation class. A mark can then only be matched as part of a word, so it stays with the letters around it. For the Llama-3 and Qwen pattern this takes two string substitutions that edit three character classes (Appendix~\ref{app:regex}); digits, contractions, whitespace and punctuation are handled as before. We call the result the \emph{repaired regex}. Qwen3.5 ships almost the same word branch, as its released tokenizer file shows~\citep{qwen35tokenizer}.

\paragraph{Continued BPE.} Following \citet{purason-etal-2026-teaching}, we pre-tokenize Nepali text with the arm's regex, segment every pre-token with the original merges, as the deployed model sees it, and run BPE over these segmentations with each base token as an indivisible symbol. The $K$ new merges are appended after the existing ones, so the original merges act first. We discard any merge whose output contains no complete code point in U+0900--U+097F, together with merges that depend on it. \RI{} and \RII{} run this procedure on the same text with the same $K$ and differ only in the regex.

\begin{proposition}[English invariance]
\label{prop:lossless}
Let $s$ be a normalised string that contains no combining mark (\pM{}) and no code point in U+0900--U+097F. Then the repaired tokenizer \RII{} and the original tokenizer produce identical token ids for $s$.
\end{proposition}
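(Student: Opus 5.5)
The proof splits into two independent claims, one for each stage of the tokenizer: the repaired regex produces the same pre-tokens as the original on $s$, and the extended merge list encodes each of those pre-tokens exactly as the original merge list does. The normaliser and the byte-to-id mapping are not touched by the repair, so these two claims together give identical token ids.

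\emph{Pre-tokenization.} The edit changes exactly three character classes: \pM{} is added to the word class and removed from the optional leading character class and from the punctuation class (Appendix~\ref{app:regex}). On a string with no code point in \pM{}, adding or removing \pM{} from a class does not change which characters of $s$ that class matches. So at every position of $s$, each alternative of the repaired pattern matches exactly the same strings as the corresponding alternative of the original. The alternatives are tried in the same order, and the regex engine's choices (leftmost alternative, greedy quantifiers, backtracking) depend only on which characters each class matches. The two scans therefore visit the same positions and select the same matches, so $P_{\RII{}}(s)=P(s)$. I would state this as a short induction on the scan position, with the class-equivalence fact as the step. Negated classes such as \texttt{[\textasciicircum\textbackslash s\textbackslash p\{L\}\textbackslash p\{N\}\textbackslash p\{M\}]} need a separate check: removing \pM{} from the complement enlarges the set only by marks, which do not occur in $s$.

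\emph{Merges.} Each pre-token is encoded by applying merges in rank order. The original merges keep their ranks, and the $K$ new merges are appended after them. Every new merge has an output token containing a complete code point in U+0900--U+097F; merges without one were discarded together with their dependents. I argue by induction on the merge steps that, when encoding a pre-token $p$ of $s$, every symbol present is a base token whose bytes are a substring of the UTF-8 encoding of $p$, so no new merge can ever apply. The key point is this: suppose a new merge's output token, which contains the three-byte UTF-8 encoding of a Devanagari code point, arose from bytes of $p$. Then those bytes would be a substring of $p$'s encoding. Because UTF-8 is self-synchronising, a lead byte \texttt{E0} followed by continuation bytes \texttt{A4}/\texttt{A5} and one more continuation byte occurring anywhere in a valid encoding must decode to that same code point in $p$. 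This contradicts the assumption that $s$ contains no Devanagari. Hence the lowest-ranked applicable pair at every step is the same under both merge lists, and the encodings coincide.

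\textbf{Main obstacle.} The delicate step is the self-synchronisation argument. The new tokens are built over base tokens that may themselves be partial byte sequences, so I must make sure that "contains a complete code point" refers to a byte window aligned at a lead byte. A byte string that starts at that lead byte and is a substring of valid UTF-8 must decode to that character. I would state this as a one-line lemma about UTF-8 (a lead byte inside a valid encoding always begins a character) and apply it to the token's bytes. The regex step is largely bookkeeping, but I would check the actual pattern strings to confirm that no lookahead such as \texttt{\textbackslash s+(?!\textbackslash S)} depends on the edited classes.
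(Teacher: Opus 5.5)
Your two-stage argument is essentially the paper's proof. The pre-tokens are identical because the three edited classes differ only on \pM{} characters. The encodings are identical because a new merge's output would be a contiguous substring of a pre-token's bytes containing a complete Devanagari code point, and UTF-8 self-synchronisation rules that out. You spell out the regex step and the UTF-8 lemma more carefully than the paper does.

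You miss one code path, which the paper handles explicitly. You model the encoding of a pre-token purely as rank-ordered merge application. The Llama-3 tokenizer, however, sets Hugging Face's \texttt{ignore\_merges}: a pre-token that is itself a vocabulary entry is emitted directly, and no merges run. Your induction over merge steps says nothing about this lookup, and the vocabulary has grown. So you must also rule out that a pre-token of $s$ now matches one of the new entries. The same argument closes this immediately: such a match would need the pre-token to equal a new token's string, and that string contains a complete code point in U+0900--U+097F, which $s$ lacks.

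Also state the final step explicitly. New ids are appended above the original ids, so every original token keeps its id. The token-to-id map is extended, so it is not accurate to call it untouched. With these two additions your proof is complete.
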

The proof is in Appendix~\ref{app:proof}. The proposition covers ASCII English and code and every script whose normalised text has no combining marks. It does not cover other scripts that use marks: the general repair changes the pre-tokens of 61 of the 204 FLORES-200 languages, including Arabic, Bengali and Thai, whose tokenization the base model has seen only in its original form.

\paragraph{A Devanagari-only repair.} Admitting only the 34 combining marks of the Devanagari block, instead of all of \pM{}, gives a stronger guarantee by the same argument: token ids are unchanged for every input without a code point in U+0900--U+097F. On FLORES-200 this variant changes pre-tokens only in the ten Devanagari languages, and on FLORES Nepali its \RII{} token ids are identical to the general repair's. On the FineWeb-2 Nepali test split (275\,MB, one document per line), 24 of 49{,}462 lines are pre-tokenized differently, all because of marks from other blocks, such as emoji variation selectors and Kannada vowel signs (Appendix~\ref{app:regex}). We recommend this variant for deployment. All experiments in this paper, and the released tokenizers, use the general repair.

\paragraph{Embedding initialisation.} Each new token's input and output rows are set to the mean of the rows of its constituent base tokens~\citep{gee-etal-2022-fast}, obtained by applying the original merges to its byte-level string. Original rows are untouched, so on any input whose ids are all original the resized model gives the same logits for every original token as the base model; on an English probe the largest difference is below $10^{-4}$. The new tokens do add terms to the softmax denominator, and continued pretraining is needed to train them.

\paragraph{Loading pitfall.} In transformers 5.x, the class \texttt{Qwen2Tokenizer} rebuilds the pre-tokenizer from a hard-coded pattern when it loads, which silently undoes a repaired regex stored in \texttt{tokenizer.json}. Our released tokenizers therefore set \texttt{tokenizer\_class} to \texttt{PreTrainedTokenizerFast}, and we check after loading that the premium matches the paper.

\paragraph{Arms.} \RO{} (no extension) keeps the original tokenizer. \RI{} (standard extension) appends $K$ merges learned under the original regex, as in \citet{purason-etal-2026-teaching} and \citet{yamaguchi2026lowres}. \RII{} (repair + extension) switches to the repaired regex and appends $K$ merges learned under it.

\section{Experimental setup}
\label{sec:setup}

We committed an analysis plan to our own repository before training any model, then seven amendments: budget cuts and the primary analysis (before any result), warm-up runs (after the first Llama results), a Llama rerun with the beginning-of-sequence (BOS) token in training, matched-steps and second-seed runs, further seeds with a learning-rate check, \RO{} at the higher rate, and a missing second seed (each of the last five before running it). The study up to the BOS rerun took about 11 hours, and several analysis choices changed after we saw results; Appendix~\ref{app:prereg} lists every commit and deviation, and Appendix~\ref{app:protocol} the training and evaluation details.

\paragraph{Models and data.} We use Llama-3.2-1B (base)~\citep{grattafiori2024llama3} and Qwen3-0.6B-Base~\citep{yang2025qwen3}, whose tokenizers ship the letters-only regex and which tie input and output embeddings. The Qwen3 tokenizer is identical across model sizes; we built it on Qwen3-1.7B-Base. Nepali text is FineWeb-2 \texttt{npi\_Deva}~\citep{penedo2025fineweb2} and English is FineWeb-Edu~\citep{penedo2024fineweb}. We normalise to NFC, deduplicate, and remove every document that shares a whitespace 10-gram with FLORES-200 dev or devtest. A 30\,MB held-out slice of each language is never trained on. \RI{} and \RII{} learn merges on the same 1\,GB of Nepali, with the registered $K{=}32{,}000$.

\paragraph{Continued pretraining.} Each arm is trained for one epoch on the same documents, 1\,GB of Nepali and 1\,GB of English interleaved in one fixed order, with identical optimisation: a fixed, untuned peak learning rate of $10^{-4}$ and cosine decay (later amendments add second seeds and, on Qwen, a rate of $3{\times}10^{-4}$). Because \RII{} encodes the same bytes in fewer tokens, it takes fewer steps. \RO{} and \RI{} also save a checkpoint at the step where \RII{} ends, before their learning rate has fully decayed. We call these the \emph{main runs}. The \emph{warm-up runs} (Llama only) repeat them after first training only the embedding matrix on 10\% of the stream at a learning rate of $10^{-3}$.

\paragraph{Evaluation.} All metrics are normalised by UTF-8 bytes. The primary metric is bits per byte (BPB) on held-out native Nepali, \emph{byte-matched}: each document is split at whitespace into pieces of about 2{,}000 bytes, and every arm scores the same pieces. Each piece is scored after one context token. The \emph{registered prefix} is the BOS token for Llama and the end-of-text token for Qwen, whose tokenizer defines no BOS token. In the main and warm-up runs, however, documents were separated only by the end-of-text token, so the Llama models never saw BOS. We therefore also score them after that token, the \emph{training separator}, as a diagnostic chosen after seeing results, and we reran the Llama main runs with BOS at every document start (\S\ref{sec:cpt}). Secondary metrics (held-out English BPB, FLORES BPB, Belebele, 5-shot translation) are in Appendix~\ref{app:cptfull}.

\paragraph{Analysis.} The registered main hypothesis was that \RII{} is non-inferior to both \RI{} and \RO{} on held-out Nepali BPB, with a margin of 0.01, while using fewer training tokens. The labels for the 95\% interval of (\RII{} $-$ other) are applied in a fixed order: \emph{better} if it lies entirely below 0; otherwise \emph{non-inferior} if it lies entirely below $+0.01$; otherwise \emph{worse} if it lies entirely above 0; otherwise \emph{no detectable difference}. So $[+0.004, +0.007]$ is non-inferior although it excludes zero. The label ``worse'' was added in the second amendment. The registered test was a paired bootstrap over held-out pieces. Pieces of one document are correlated, so we report a bootstrap over documents instead ($B{=}10{,}000$). Clustering by the 503 held-out documents (Appendix~\ref{app:protocol}) widens the Nepali intervals 1.4 to 2.6 times and changes no verdict. The same test on English pieces is the registered English control. All intervals describe the sampling of evaluation text. Run-to-run variance is measured only by second seeds (Appendix~\ref{app:extra-runs}).

\section{Results}
\label{sec:results}

\begin{table*}[t]
\centering\small
\setlength{\tabcolsep}{4.5pt}
\begin{tabular}{lccccc}
\toprule
 & \multicolumn{4}{c}{Llama-3.2-1B} & Qwen3-0.6B \\
\cmidrule(lr){2-5}\cmidrule(lr){6-6}
 & \multicolumn{2}{c}{main runs} & warm-up runs & BOS rerun & main runs \\
\cmidrule(lr){2-3}\cmidrule(lr){4-4}\cmidrule(lr){5-5}\cmidrule(lr){6-6}
 & registered prefix & training sep. & training sep. & registered prefix & (reg.\ = sep.) \\
\midrule
\multicolumn{6}{l}{\emph{Held-out Nepali BPB} (lower is better)} \\
Base model, no CPT & 0.766 & n/a & n/a & 0.766 & 0.852 \\
\RO{} & 0.474 & 0.471 & 0.471 & 0.471 & 0.501 \\
\RI{} & 0.534 & 0.483 & 0.480 & 0.483 & 0.561 \\
\RII{} & 0.539 & 0.510 & 0.501 & 0.509 & 0.570 \\
\midrule
\multicolumn{6}{l}{\emph{Paired difference in held-out Nepali BPB} (positive: first arm worse), document-clustered 95\% interval} \\
\RI{} $-$ \RO{} & +0.060 & +0.012 & +0.010 & +0.012 & +0.060 \\
 & [+0.058, +0.061] & [+0.011, +0.013] & [+0.009, +0.011] & [+0.011, +0.013] & [+0.058, +0.062] \\[2pt]
\RII{} $-$ \RO{} & +0.066 & +0.039 & +0.031 & +0.038 & +0.070 \\
 & [+0.064, +0.068] & [+0.038, +0.041] & [+0.029, +0.032] & [+0.037, +0.039] & [+0.067, +0.072] \\[2pt]
\RII{} $-$ \RI{} & +0.006 & +0.027 & +0.021 & +0.026 & +0.009 \\
 & [+0.004, +0.007] & [+0.026, +0.028] & [+0.020, +0.022] & [+0.024, +0.027] & [+0.008, +0.010] \\
\midrule
Training tokens (M) & \multicolumn{3}{c}{417 / 386 / 286} & 418 / 386 / 286 & 574 / 392 / 292 \\
Decoding steps & \multicolumn{4}{c}{2.55 / 2.17 / 1} & 4.35 / 2.16 / 1 \\
\bottomrule
\end{tabular}
\caption{Continued pretraining, seed 0 of each arm, on the same 1\,GB of Nepali and 1\,GB of English. \textbf{Our results are the BOS-rerun (Llama) and Qwen columns}; two-seed means and matched-steps runs are in Table~\ref{tab:extra-runs}; the other Llama columns show the scoring artifact of \S\ref{sec:cpt}. BPB: held-out native Nepali, in pieces of about 2{,}000 bytes, scored after the registered prefix (BOS for Llama; end-of-text for Qwen, which has no BOS) or the training separator (end-of-text; a post hoc diagnostic). Intervals resample 503 held-out documents and reflect only the sampling of evaluation text. Training tokens and decoding steps (FLORES Nepali, relative to \RII{}) are \RO{} / \RI{} / \RII{}. Secondary metrics: Table~\ref{tab:cpt}.}
\label{tab:prefix}
\end{table*}

\subsection{Compression}
\label{sec:compression}

Figure~\ref{fig:ksweep} sweeps the number of new merges $K$. Under the letters-only regex (\RI{}), extension saturates almost at once. For Llama-3 the first 1{,}000 merges take the premium from $2.58$ to $2.32$, and the next 63{,}000 only to $2.20$, within $0.01$ of the floor. For Qwen3, \RI{} reaches $2.19$ by $K{=}16$k and stops at $2.18$, against a floor of $2.17$. It stops early because the letters-only regex leaves few distinct Nepali pre-tokens to merge (Appendix~\ref{app:extras}). Under the repaired regex (\RII{}), additional merges keep helping. At $K{=}32$k Nepali reaches parity with English ($1.01$) on both models, and at $64$k it is cheaper than English ($0.95$), below \texttt{o200k}. At $K{=}32$k, \RII{} needs 46\% of \RI{}'s tokens for FLORES Nepali. All 28 extended tokenizers decode losslessly and reproduce the original token ids on all 1{,}012 English devtest sentences, as \S\ref{sec:floor} and Proposition~\ref{prop:lossless} predict.

\begin{figure}[t]
\centering
\includegraphics[width=\columnwidth]{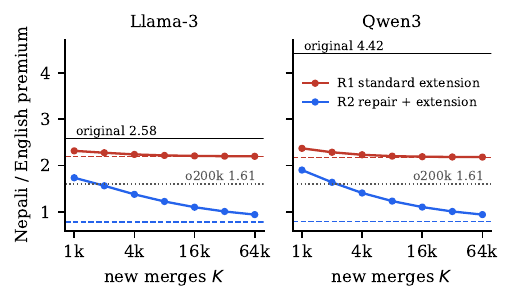}
\caption{Nepali/English premium on FLORES-200 devtest after adding $K$ merges by continued BPE, under the letters-only regex (\RI{}) and the repaired regex (\RII{}). Dashed lines: pre-token floors under each model's own pattern ($2.19$ and $2.17$ letters-only; $0.79$ and $0.80$ repaired, for Llama-3 and Qwen3). 95\% paired-bootstrap intervals (half-width at most 0.03) are narrower than the markers.}
\label{fig:ksweep}
\end{figure}

Both parts of the repair are needed. Repairing the regex without new merges changes almost nothing ($2.61$ for Llama-3, $4.42$ for Qwen3), because the old merges still cut Nepali words into the old pieces. The repair also changes where tokens start. The share of Nepali tokens that start with a combining mark, a rough sign of a split syllable, rises from 55\% (Llama-3) and 32\% (Qwen3) to 65\% and 64\% under \RI{}, since new merges can only build on mark-initial fragments, and falls to 13\% and 11\% under \RII{}. The mark-initial share does not predict model quality in \S\ref{sec:cpt}: \RII{} has by far the fewest mark-initial tokens yet trails \RO{} on both models.

We repeated the sweep for seven other languages written with combining marks and two controls (Table~\ref{tab:xling}, Appendix~\ref{app:xling}), learning merges on 25 to 361\,MB of FineWeb-2 text per language. At $K{=}32$k, \RII{} brings all seven to between $0.88$ and $1.24$ times English. \RI{} ends within 5\% of its floor in six of them. Thai is the exception: it is written without spaces, so its letters-only pre-tokens are long and leave \RI{} room. For Amharic and Korean, whose scripts have no combining marks, \RI{} and \RII{} are identical at every $K$, as the mechanism predicts.

\subsection{Continued pretraining}
\label{sec:cpt}

\paragraph{The registered outcome.} Table~\ref{tab:prefix} gives the results. The main hypothesis fails against \RO{}: continued pretraining without new tokens gave the lowest held-out Nepali BPB on both models. On Qwen, \RII{} was worse than \RO{} by $0.070$, far beyond the 0.01 margin, and \RI{} was worse by $0.060$. Against standard extension, \RII{} $-$ \RI{} $= +0.0094$ $[+0.0084, +0.0104]$ (registered piece-level bootstrap: $[+0.0086, +0.0101]$), labelled ``worse'' although the point estimate lies inside the margin; averaged over two seeds it is $+0.0122$ $[+0.0112, +0.0132]$.

\paragraph{The BOS-consistent rerun.} The main Llama runs never saw BOS (\S\ref{sec:setup}). Under the registered prefix they give \RII{} $-$ \RI{} $= +0.0058$ $[+0.0043, +0.0073]$, non-inferior although it excludes zero, but $+0.027$ after the training separator. A third amendment retrained the three Llama arms with BOS before every document and nothing else changed; we take this rerun as our Llama result. \RO{}, \RI{} and \RII{} reach $0.471$, $0.483$ and $0.509$: \RI{} is worse than \RO{} by $0.0124$ $[0.0114, 0.0135]$ and \RII{} worse than \RI{} by $0.0256$ $[0.0245, 0.0267]$, about 5\% ($0.0252$ over two seeds). These match the separator diagnostic within $0.002$, so the main runs' registered-prefix numbers reflect a scoring artifact of the unseen BOS token; after the end-of-text token the rerun scores $0.875$ to $1.666$ (Appendix~\ref{app:cptfull}).

\paragraph{Matched steps.} At equal bytes, \RII{} takes 26\% fewer steps than \RI{}. A fourth amendment retrained \RII{} for \RI{}'s step count (737 for Llama with BOS, 748 for Qwen), drawing the extra 35\% of windows from a second pass over its stream, and a fifth added second seeds. Averaged over two seeds, matched-steps \RII{} is \emph{better} than \RI{} on Qwen by $0.0099$ $[0.0088, 0.0109]$ ($0.549$ against $0.559$) and worse on Llama by $0.0120$ $[0.0111, 0.0129]$ ($0.495$ against $0.483$); \RO{} stays best, by $0.049$ and $0.024$. Matching steps thus reverses the equal-bytes gap on Qwen and halves it on Llama, though with 35\% of the data repeated. Seeds move each arm by about $0.003$ or less (Appendix~\ref{app:extra-runs}).

\paragraph{Learning rate.} A fifth and a sixth amendment repeated the Qwen arms at $3{\times}10^{-4}$ instead of $10^{-4}$ (one seed, \RII{} at matched steps). Every arm improves, \RO{} by $0.021$, \RI{} by $0.052$ and \RII{} by $0.046$, but the order holds: \RO{} $0.479$, \RII{} $0.504$, \RI{} $0.509$. \RII{} beats \RI{} by $0.0057$ $[0.0047, 0.0066]$ and trails \RO{} by $0.024$ $[0.023, 0.026]$, so our predictions that \RO{} would gain little and \RII{} would match it failed. English gets worse: $0.875$ to $0.879$ BPB, against $0.862$ to $0.864$ at $10^{-4}$ and $0.873$ for the base model.

\paragraph{What the repair buys.} The clear gain is in token count. For the same FLORES Nepali text, \RII{} needs 46\% of \RI{}'s tokens on both models, and 39\% (Llama) and 23\% (Qwen) of \RO{}'s. We have not shown faster generation of equally good text: our timings forced 128 generated tokens per model on shared GPUs, used the original Llama runs and compared different outputs. In them \RII{}'s generations carry 1.55 (Llama) and 1.47 (Qwen) times as many bytes per token as \RI{}'s, and throughput is 1.48 and 1.50 times higher, which we treat as indicative only.

\paragraph{English.} The registered English control holds: \RII{} $-$ \RO{} on held-out English is $-0.0010$ $[-0.0014, -0.0007]$ (Llama rerun) and $-0.0028$ $[-0.0032, -0.0024]$ (Qwen), and \RII{} beats \RI{} by $0.0022$ on both. ``English-preserving'' refers to token ids only: every Llama arm ends worse on English than the base model ($0.798$ to $0.804$ against $0.788$).

\section{Discussion}
\label{sec:discussion}

\paragraph{Recommendations.} Before extending a letters-only model for a language with combining marks, compare the target text's pre-tokens with its tokens; if they are close, as for Llama-3 on Nepali, merges have little room. Under our recipes, plain continued pretraining gave the lowest held-out Nepali BPB, as \citet{zhao2024llamabeyond} found for Chinese, so extension needs another reason, such as token cost. Then the repair is worth considering: at matched steps it was within about 0.012 BPB of standard extension either way and needs 54\% fewer tokens for the same Nepali text, though we have not shown a latency gain for equally good outputs. We would use the Devanagari-only repair (untrained here; identical to ours on FLORES Nepali), load it with a class that does not rebuild its pre-tokenizer, and score models after the context they were trained with.

\paragraph{Why our sign differs from Regmi et al.} From scratch the wider class lowers Nepali BPB by 4.4\% at equal compute and still leads at equal bytes~\citep{regmi2026vowel}; retrofitted at equal bytes it is worse, and the gap decreases under longer training with repeated data \citep[cf.][]{shrestha2026papaya}: matched steps flip the sign on Qwen and halve the gap on Llama.

\section*{Limitations}

\begin{itemize}
  \setlength\itemsep{0pt}
  \item \textbf{The main hypothesis failed.} At our budget, neither extension recipe beat continued pretraining without new tokens on Nepali BPB. At equal bytes the repair was also worse than standard extension on both models; at matched steps it was better on Qwen and still worse on Llama.
  \item \textbf{Scale.} We adapt 1B- and 0.6B-parameter models with 1\,GB of text per language and one epoch; 1\,GB of Nepali also holds less content than 1\,GB of English~\citep{arnett-etal-2024-bit}. The registered 3\,GB runs were dropped for budget reasons, so we cannot say whether the gap between arms closes with more data; \citet{dagan2024getting} found recovery from a tokenizer change only after about 50B tokens in their code-adaptation setting.
  \item \textbf{Matched steps repeat data.} The matched-steps \RII{} runs reach \RI{}'s step count by seeing 35\% of their stream twice; they hold compute equal, not data, though repeating data for a few epochs costs little~\citep{muennighoff2023scaling}. We did not run \RO{} at matched steps, and Nepali is still a smaller share of \RII{}'s training tokens (26\% against 45\% for \RI{}).
  \item \textbf{Scoring two tokenizations.} For the seed-0 equal-bytes models we add the base tokenization to the canonical one, a partial step towards the marginal likelihood~\citep{cao-rimell-2021-evaluate,chirkova-etal-2023-marginalize}. It changes no score, because the models give the base tokenization almost no probability; this is a weak check, since nearer alternatives, such as splitting one new token, could still carry mass.
  \item \textbf{Few runs.} Every interval reflects only the sampling of evaluation text. In the Llama BOS rerun and on Qwen, every arm at $10^{-4}$ has two seeds, which differ by about $0.003$ or less per arm; the runs at $3{\times}10^{-4}$ have one.
  \item \textbf{Clustering.} Nepali intervals resample the 503 held-out documents; English intervals still cluster pieces by a length heuristic.
  \item \textbf{Learning rate.} We tried two learning rates, on Qwen only and with one seed at the higher one; continued-pretraining schedules matter~\citep{ibrahim2024simple}, and the English loss at the higher rate suggests forgetting. Moving from $10^{-4}$ to $3{\times}10^{-4}$ improved every arm by $0.02$ to $0.05$ without changing their order, so the best rate for each arm may differ from both and Llama was not tuned at all. Mean initialisation may also suit \RII{}'s long new tokens less well than \RI{}'s short ones~\citep{joshi2026beyondinit}.
  \item \textbf{BOS in training.} The main and warm-up Llama runs never saw BOS, which the registered evaluation puts first, so their registered scores and all their secondary metrics carry this artifact; the warm-up translation scores are unusable. The rerun with BOS in training resolves the main comparison, but only on BPB: we did not run Belebele, translation or generation for it, nor a warm-up version. Qwen has no BOS token and was not affected.
  \item \textbf{Weak secondary metrics.} Belebele accuracy is near chance for every model, translation scores come from a single greedy pass without intervals and keep only the first output line, which is empty whenever a model begins its answer with a newline (we did not count how often), and measured generation speed was taken on shared GPUs (Appendix~\ref{app:cptfull}).
  \item \textbf{Equal compute.} The equal-compute checkpoints of \RO{} and \RI{} are taken before their learning rate has fully decayed, and are scored on 1\,MB only.
  \item \textbf{Languages.} We train models for Nepali only. For other languages we report compression, and the general repair, which we trained and released, changes the tokenization of 61 FLORES-200 languages whose quality we did not measure, including Hindi and Thai, which Llama-3.2 officially supports. The cross-script tokenizers use a relaxed merge filter not covered by Proposition~\ref{prop:lossless} (Appendix~\ref{app:xling}).
  \item \textbf{Added tokens and segmentation.} We measured the added-token route around the floor only at the tokenizer level and trained no model with it; \citet{churchill2026reducing} add tokens post hoc to Llama 3.2 1B for whole characters, not words, and \citet{balde-etal-2024-adaptive} change how added tokens are matched. Our objection to it is segmentation, and we have no evidence that segmentation matters for quality. Whether the regex repair beats added tokens is untested.
  \item \textbf{Scoring context.} The BOS-consistent rerun scores every piece after BOS, but 1{,}756 of the 2{,}259 pieces start mid-document, a context these models never saw after BOS in training.
  \item \textbf{What would settle it.} A learning-rate sweep on both models, better embedding initialisation and a curve over data budgets would show whether \RO{}'s lead and the Llama gap persist.
\end{itemize}

\section*{Ethics and use of AI assistance}

All text and evaluation data are public (FineWeb-2, FineWeb-Edu, FLORES-200, Belebele) and used under their licences; we collected no data from people. Released Llama-derived tokenizers follow the Llama 3.2 Community License~\citep{llama32license}; Qwen3-derived tokenizers are Apache-2.0.

We used a large language model assistant extensively. Under our direction it wrote and ran the experiment code, searched the literature and drafted the text. We reviewed and approved the study design, the paper and all citations, and take responsibility for the content. The tokenizer-side numbers can be reproduced from public tokenizers and data with our released code, and every continued-pretraining statistic from the released per-piece scores.

\bibliographystyle{IEEEtranN}
\bibliography{refs}

\appendices
\section{Pre-registration history}
\label{app:prereg}

We committed the analysis plan to the project repository in nine steps between 2026-10-04 and 2026-10-06\tzlabel{}, and report all nine. Times are commit times.

\begin{enumerate}
  \setlength\itemsep{0pt}
  \item \textbf{Original registration} (\commitA{}, 17:59) registered hypotheses about the mechanism, a regex-only tokenizer ablation, a cross-script dose--response and training from scratch. Our literature check then found that \citet{regmi2026vowel} had published the mechanism, the ablation, the cross-script analysis and a from-scratch comparison five weeks earlier. We withdrew those hypotheses as contributions.
  \item \textbf{Retrofit registration} (\commitB{}, 18:16), before any model was trained or evaluated, registered the study reported here: arms \RO{}, \RI{}, \RII{} and the untrained base model; $K{=}32{,}000$ unless \RII{} saturated earlier on FLORES dev; equal-bytes continued pretraining of Llama-3.2-1B and Qwen3-1.7B-Base on a fixed byte budget split equally between Nepali and English (3\,GB each in our plan, a figure first written down in the first amendment); a learning rate selected once per model on \RO{}; two seeds. Its hypotheses were P1 (compression ordering, with \RI{} above the letters-only floor), P2, the main hypothesis (held-out Nepali BPB of \RII{} at most 0.01 above both \RI{} and \RO{}, while using fewer training tokens), P3 (English BPB of \RII{} and \RO{} within 0.01) and P4 (\RII{} decodes Nepali faster at equal bytes). P1 had already been measured when it was written, so we report it as a measurement.
  \item \textbf{First amendment} (\commitC{}, 19:50; file \texttt{PREREG\_AMENDMENT.md}, which numbers the amendments from 2 because it counts the retrofit registration as the first), written after an internal review and before any continued-pretraining run had produced an evaluation, recorded the deviations below, fixed four pipeline bugs, and fixed the primary analysis: byte-matched BPB, a paired bootstrap over pieces and three claim labels (better, non-inferior, no detectable difference; \S\ref{sec:setup} gives the later ordered set). Earlier launches had been stopped by us because of the bugs, and none produced an evaluated checkpoint. The only earlier evaluation, of the untouched base Llama-3.2-1B with a since-fixed prefix bug, is discarded.
  \item \textbf{Second amendment} (\commitD{}, 21:46; \texttt{PREREG\_AMENDMENT3.md}) was written \emph{after} observing the Llama \RO{} and \RII{} main-run results and before any \RI{} or Qwen result. It registered the warm-up runs, with the prediction that they shrink the \RII{} $-$ \RO{} gap, and 3\,GB runs that we did not run, and it added the ``worse'' category to the claim rules.
  \item \textbf{Third amendment} (\commitE{}, 2026-10-05, 03:24; \texttt{PREREG\_AMENDMENT4.md}) was written after all main and warm-up results and before running what it registered: the three Llama main runs repeated with every training document wrapped as BOS, document, end-of-text, scored with the registered protocol. It predicted that scores after BOS and after the end-of-text token would agree within 0.005 for every arm, and that \RII{} $-$ \RI{} after BOS would lie between the two earlier values ($+0.006$ and $+0.027$). The second prediction held ($+0.026$), and the rerun matches the earlier separator diagnostic within 0.002, which confirms that diagnostic in spirit. The first prediction failed: after the end-of-text token the rerun models score far worse (Appendix~\ref{app:cptfull}).
  \item \textbf{Fourth amendment} (\commitF{}, 2026-10-05, 16:08; \texttt{PREREG\_AMENDMENT5.md}) was written after all earlier results and two external reviews, and before running what it registered: \RII{} retrained for \RI{}'s step count on Llama (BOS setting) and Qwen, Qwen \RI{} and \RII{} with a second seed, and two-tokenization scoring. It predicted that matched-steps \RII{} would be non-inferior to \RI{} on Qwen (it was better), smaller-gapped but still worse on Llama (held), still worse than \RO{} on both (held), that a second Qwen seed would move \RII{} $-$ \RI{} by less than 0.005 (it moved by 0.0057, so this failed narrowly), and that two-tokenization scoring would change no verdict (held).
  \item \textbf{Fifth amendment} (\commitG{}, 2026-10-06, 00:48; \texttt{PREREG\_AMENDMENT6.md}), written after the fourth amendment's results and before running, added second seeds for matched-steps \RII{} (both models), Llama \RO{} and \RI{} and Qwen \RO{}, and Qwen \RI{} and matched-steps \RII{} at a learning rate of $3{\times}10^{-4}$, under a budget cap. It predicted that with seeds averaged matched-steps \RII{} would be better than \RI{} on Qwen and worse on Llama, that \RO{} would stay best, and that \RII{} $-$ \RI{} would keep its sign at $3{\times}10^{-4}$. All four held.
  \item \textbf{Sixth amendment} (\commitH{}, 2026-10-06, 05:01; \texttt{PREREG\_AMENDMENT7.md}), written after the fifth amendment's results and before running, added Qwen \RO{} at $3{\times}10^{-4}$. It predicted that \RO{} would gain less than 0.005 and that matched-steps \RII{} would be non-inferior to it. Both failed: \RO{} gained $0.021$ and stays ahead by $0.024$.
  \item \textbf{Seventh amendment} (\commitI{}, 2026-10-06, 07:30; \texttt{PREREG\_AMENDMENT8.md}), written after an external review and before running, added a second seed for equal-bytes Llama \RII{}. It predicted that the two-seed \RII{} $-$ \RI{} would stay worse and within 0.005 of $+0.0256$; it held ($+0.0252$ $[+0.0241, +0.0263]$).
\end{enumerate}
Two departures from these amendments: the fourth registered two-tokenization scoring also for its new runs, but we applied it only to the six seed-0 equal-bytes models of the Llama rerun and Qwen; and the first amendment stated that we would not tune the learning rate on results, whereas the fifth and sixth test a second rate after seeing results. We use that rate only as a robustness check, not to select one.

In the repository these are \texttt{PREREG.md}, \texttt{PREREG\_RETROFIT.md}, \texttt{PREREG\_AMENDMENT.md}, and \texttt{PREREG\_AMENDMENT3.md} to \texttt{PREREG\_AMENDMENT8.md}. The first four commits lie within four hours, the fifth about six hours later, and the last three over the following day. All are in a repository we control, so they document our plan without an external timestamp. Apart from the author line of \texttt{PREREG.md}, no plan file was edited after its commit.

\paragraph{Outcome of each registered hypothesis.} P1 holds as a measurement (\S\ref{sec:compression}). P2 fails against \RO{} on both models. Against \RI{} it fails for Llama in the BOS-consistent rerun (\RII{} is worse by $0.026$), and it also fails for Qwen (\RII{} is worse by $0.009$, $0.012$ over two seeds); it held for Llama only in the main runs under the registered prefix, a result of the BOS artifact (\S\ref{sec:cpt}). P3 holds. P4, faster generation in bytes per second at equal generated bytes, was not reliably evaluated: our timings ran on shared GPUs with a fixed number of generated tokens (Appendix~\ref{app:cptfull}). We report token counts as a separate result. The warm-up prediction holds after the training separator; under the registered prefix the warm-up scores are dominated by the BOS artifact.

\paragraph{Deviations from the retrofit registration.} The first amendment recorded these before any result:
\begin{itemize}
  \setlength\itemsep{0pt}
  \item Qwen3-0.6B-Base replaces Qwen3-1.7B-Base, for budget and time. Their tokenizers are identical.
  \item 1\,GB of Nepali and 1\,GB of English per arm replace 3\,GB each, with identical documents in every arm.
  \item A fixed learning rate of $10^{-4}$ replaces selection on \RO{} from $\{3{\times}10^{-5}, 10^{-4}, 3{\times}10^{-4}\}$ by a 200-step pilot, which was never completed (the candidate set and pilot were first recorded in the first amendment).
  \item One seed per arm replaces two.
  \item Added: \RO{} and \RI{} checkpoints at \RII{}'s final step (equal-compute view); byte-matched BPB as the primary metric, with the registered sliding-window BPB as secondary; the paired bootstrap with claim rules as the primary test; a translation cap of 512 instead of 256 new tokens.
\end{itemize}
Besides the runs added by the second to seventh amendments, we made four further changes after seeing results, all reported as such: intervals from a bootstrap over reconstructed documents instead of pieces (\S\ref{sec:setup}); the training-separator diagnostic; decoding steps instead of measured bytes per second for P4, whose implementation had fixed the number of generated tokens rather than bytes; and the ``worse'' label from the second amendment for the main runs.

\paragraph{Pipeline bugs fixed before any reported run.} (1) Documents had been stored one per line with their internal newlines, so ``documents'' were paragraphs; they are now stored as one JSON-encoded document per line. (2) New-token embeddings had been initialised by decoding tokens to text, which turns partial UTF-8 tokens into U+FFFD; constituents now come from applying the original merges to the byte-level string. (3) The initialisation check had covered Qwen's padding rows; it now covers only the original vocabulary. (4) Evaluation had prefixed documents with the end-of-text token, while base Llama expects BOS; the registered evaluation uses BOS where it exists.

\section{Repair details}
\label{app:regex}

\paragraph{Regex edit.} For the Llama-3 and Qwen pattern we make two string substitutions. The first replaces \texttt{\textbackslash p\{L\}+} by \texttt{[\textbackslash p\{L\}\textbackslash p\{M\}]+} in the word branch. The second inserts \texttt{\textbackslash p\{M\}} after \texttt{\textbackslash p\{L\}} in the two negated classes, the optional leading character of the word branch and the punctuation branch. Three character classes change:
\begin{center}\small
\texttt{[\textasciicircum\textbackslash r\textbackslash n\textbackslash p\{L\}\textbackslash p\{N\}]?\textbackslash p\{L\}+}\\
$\downarrow$\\
\texttt{[\textasciicircum\textbackslash r\textbackslash n\textbackslash p\{L\}\textbackslash p\{M\}\textbackslash p\{N\}]?[\textbackslash p\{L\}\textbackslash p\{M\}]+}\\[4pt]
\texttt{\textvisiblespace?[\textasciicircum\textbackslash s\textbackslash p\{L\}\textbackslash p\{N\}]+}
$\;\rightarrow\;$
\texttt{\textvisiblespace?[\textasciicircum\textbackslash s\textbackslash p\{L\}\textbackslash p\{M\}\textbackslash p\{N\}]+}
\end{center}
Nothing else in the pattern changes.

\paragraph{Devanagari-only variant.} The variant replaces \pM{} in the three classes by an explicit class of the 34 combining marks of the Devanagari block (U+0900--0903, U+093A--093C, U+093E--094F, U+0951--0957, U+0962--0963). We checked that this class equals the set of \pM{} code points in the block under Python's \texttt{unicodedata} (Unicode 15.0), the \texttt{regex} module and the Hugging Face \texttt{tokenizers} engine. Every other character has the same class membership as in the original regex, so any text without a Devanagari mark is pre-tokenized exactly as before. Across the 204 FLORES-200 devtest files, the general repair changes pre-tokens in 61 languages for both models and the Devanagari-only variant in exactly the ten Devanagari languages. On FLORES Nepali the two variants give identical pre-tokens and identical \RII{} token ids. On the FineWeb-2 Nepali test split (275\,MB, 49{,}462 documents, one per line), 24 lines differ, in 47 segments; the marks involved are emoji variation selectors, Kannada, Bengali, Thai and Hebrew signs and one combining candrabindu in romanised text, and the \RII{} token count of those lines is the same under both.

\paragraph{Continued BPE.} At inference BPE applies the lowest-ranked applicable merge first, so the original merges act before the new ones. The extended tokenizer reproduces the training procedure except in two cases: a new merge whose output string is already in the original vocabulary can let an original merge fire again afterwards, and under Hugging Face's \texttt{ignore\_merges} option (set for Llama-3) a pre-token that is itself a vocabulary entry is emitted directly. Neither case affects Proposition~\ref{prop:lossless}. We run merge learning inside the Hugging Face Rust trainer by mapping each base token to a private-use code point; learning $K{=}32{,}000$ merges on 1\,GB of Nepali takes 1.7 to 3.4 minutes per tokenizer.

\paragraph{Scope of Proposition~\ref{prop:lossless}.} The proposition covers ASCII English and code and every script whose normalised text has no combining marks, including most Cyrillic, Han, Hangul and Ethiopic text. It does not cover Devanagari, which changes by design, or text with marks, such as Hindi, Arabic with vowel points, Thai, or Latin written in decomposed form (NFD). The Llama-3 tokenizer has no normalizer, so deployments should apply NFC first; Qwen3's tokenizer applies NFC itself. It also assumes that the regex engine's \pM{} table matches the one used to define $s$; for the Devanagari-only variant we checked this directly. Empirically, \RI{} and \RII{} at $K{=}32{,}000$ give token ids identical to the original tokenizer on all 1{,}012 FLORES devtest sentences in English, Russian and Amharic, for both models, including one Russian sentence with a combining acute that lies outside the proposition.

\section{Proof of Proposition~\ref{prop:lossless}}
\label{app:proof}

\begin{proof}
\emph{Pre-tokens.} The repaired regex differs from the original only in the three classes of Appendix~\ref{app:regex}, and each differs from its original only on \pM{} characters. A regex match on $s$ consults only characters of $s$, and the order of the alternatives is unchanged. Since $s$ has no \pM{} character, both regexes produce the same pre-tokens.
\emph{Merges and vocabulary.} \RII{} differs from the original tokenizer only by appended merges and the vocabulary entries they create, each of which contains a complete code point in U+0900--U+097F. A merge $(a,b)$ applies only where $a$ and $b$ are adjacent symbols of a pre-token, so its output is a contiguous substring of that pre-token's bytes. A vocabulary entry is emitted directly under \texttt{ignore\_merges} only if it equals the whole pre-token. Neither can happen for a pre-token of $s$, because UTF-8 is self-synchronising and $s$ contains no such code point. Each pre-token of $s$ is therefore encoded with original merges and original vocabulary entries only, exactly as by the original tokenizer.
\emph{Ids.} New ids are appended above the original ids, so all ids coincide.
\end{proof}

\section{Training and evaluation protocol}
\label{app:protocol}

\paragraph{Data.} We normalise to NFC, drop documents under 200 characters, remove exact duplicates, and drop every document that shares a whitespace 10-gram with FLORES-200 dev or devtest in Nepali or English. This also covers the Belebele passages, which are FLORES passages~\citep{bandarkar-etal-2024-belebele}, but not its questions or answers. Documents are stored one JSON-encoded document per line. A 30\,MB held-out slice of each language is carved from the cleaned stream before training and never trained on. The 1\,GB of Nepali used to learn merges is taken from the start of the training stream.

\paragraph{Continued pretraining.} Documents are interleaved and separated by the model's end-of-text token; in the main and warm-up runs no BOS token is inserted, and in the Llama BOS-consistent rerun every document is wrapped as BOS, document, end-of-text. Optimisation is identical across arms: sequence length 2{,}048, global batch 256 sequences (0.52M tokens), AdamW ($\beta = (0.9, 0.95)$, weight decay 0.1), gradient clipping at 1.0, 1\% linear warm-up, then cosine decay to 10\% of peak, bf16 autocast with fp32 master weights, on 4$\times$H200. Initialisation is deterministic and there is no dropout, so a second seed would change only the order of training windows and bf16 non-determinism. In the warm-up runs, stage 1 trains the whole tied embedding matrix, old rows included, at $10^{-3}$ on the first 10\% of the stream; stage 2 is a main run started from the stage-1 model.

\paragraph{Evaluation.} Every scored sequence is preceded by one context token (\S\ref{sec:setup}), identically for every arm of a model.
\begin{itemize}
  \setlength\itemsep{0pt}
  \item \textbf{BPB} on the held-out native slices (the first 4\,MB of each) and on FLORES-200 devtest. In the primary byte-matched version, each held-out document is split at whitespace into pieces of at most 2{,}000 bytes and every piece is scored independently, so every arm conditions on the same bytes. Token windows would give \RII{} more bytes of context per scored token. As a secondary metric, whole documents are scored with a sliding window of 2{,}048 tokens and stride 1{,}024.
  \item \textbf{Belebele}~\citep{bandarkar-etal-2024-belebele} \texttt{npi\_Deva} and \texttt{eng\_Latn}, zero-shot, scoring each answer string by its log-likelihood per byte given passage and question.
  \item \textbf{Translation}: FLORES-200 devtest Nepali$\leftrightarrow$English, 5-shot from FLORES dev, greedy decoding capped at 512 new tokens, first output line, chrF++.
  \item \textbf{Generation speed}: UTF-8 bytes generated per second in greedy continuation of FLORES dev Nepali sentences (batch 64, 128 new tokens).
\end{itemize}
Document ids were not stored with the pieces. The Nepali held-out slice holds 503 documents (the first 503 of FineWeb-2 \texttt{npi\_Deva} after cleaning; \url{results/revision/heldout_ne_doc_ids.json}), recovered after an external review, and all Nepali intervals resample these documents. An earlier version of this paper used a length heuristic (a piece shorter than 1{,}900 bytes ends a document, 484 clusters); switching moved interval endpoints by at most $0.0003$. English intervals still use that heuristic. The margin of 0.01 is 1.3\% of the base Llama model's Nepali BPB and 1.2\% of base Qwen's. The equal-compute checkpoints are scored on BPB only, on the first 1\,MB of each held-out slice (567 Nepali pieces from 114 documents).

\section{Data}
\label{app:data}

Table~\ref{tab:data} gives the corpus cleaning and the token count of each arm's training stream.

\begin{table}[t]
\centering\small
\setlength{\tabcolsep}{3.5pt}
\begin{tabular}{lrr}
\toprule
& Nepali & English \\
\midrule
documents read & 170,790 & 215,961 \\
dropped: under 200 chars & 0 & 0 \\
dropped: exact duplicate & 0 & 69 \\
dropped: FLORES 10-gram & 0 & 16 \\
training documents & 166,300 & 209,713 \\
held-out (MB) & 30 & 30 \\
\midrule
\multicolumn{3}{l}{Training tokens per arm (millions)}\\
Llama-3.2-1B \RO{} & 207 & 211 \\
Llama-3.2-1B \RI{} & 176 & 211 \\
Llama-3.2-1B \RII{} & 76 & 211 \\
Qwen3-0.6B \RO{} & 359 & 215 \\
Qwen3-0.6B \RI{} & 177 & 215 \\
Qwen3-0.6B \RII{} & 76 & 215 \\
\bottomrule
\end{tabular}
\caption{Corpus cleaning (FineWeb-2 \texttt{npi\_Deva}; FineWeb-Edu; documents placed in the 30\,MB held-out slice are not counted as training documents) and the token counts of each arm's identical 2\,GB training stream. English token counts differ across arms of a model by less than 0.012\%, from the few English documents that contain Devanagari or combining marks.}
\label{tab:data}
\end{table}

FineWeb-2 is already deduplicated and filtered for length, which is why our own filters remove almost nothing.

\section{Full audit}
\label{app:audit}

Table~\ref{tab:audit-full} lists every distinct tokenizer. For each of the nine aliases we compared token ids with its partner on all 1{,}012 devtest sentences of English, Nepali, Hindi, Chinese and Arabic; all were identical. Kimi-K3 (requires executing remote code) and Mistral-Large-3 (its transformers load does not round-trip; its \texttt{tokenizer.json} uses Mistral-NeMo's mark-aware regex) are not included. The four tokenizers that do not round-trip belong to encoders and translation models; we keep them as reference points.

\begin{table*}[t]
\centering\small
\setlength{\tabcolsep}{4pt}
\begin{tabular}{lllrrrrrl}
\toprule
Tokenizer & Aliases & Regex & $|V|$ & ne/en & 95\% CI & Floor & Mark-init. & Lossless \\
\midrule
IndicBERTv2 & -- & none & 250,000 & 1.09 & [1.08, 1.10] & -- & 6\% & no \\
Arkios & -- & mark-aware & 65,536 & 1.12 & [1.10, 1.13] & 0.79 & 15\% & yes \\
XLM-R & -- & none & 250,002 & 1.12 & [1.11, 1.14] & -- & 7\% & no \\
NLLB-200 & -- & none & 256,204 & 1.17 & [1.16, 1.18] & -- & 6\% & no \\
BLOOM & -- & other & 250,680 & 1.18 & [1.16, 1.19] & -- & 17\% & yes \\
mT5 & -- & none & 250,100 & 1.46 & [1.45, 1.47] & -- & 11\% & no \\
Gemma-3 & Gemma-4 & none & 262,145 & 1.53 & [1.51, 1.54] & -- & 9\% & yes \\
Sarvam-1 & -- & none & 68,096 & 1.54 & [1.52, 1.56] & -- & 14\% & yes \\
o200k & gpt-oss & mark-aware & 200,019 & 1.61 & [1.60, 1.63] & 0.80 & 26\% & yes \\
Llama-4 & -- & mark-aware & 201,135 & 1.83 & [1.81, 1.85] & 0.80 & 30\% & yes \\
Mistral-NeMo & Sarvam-M & mark-aware & 131,072 & 2.13 & [2.11, 2.15] & 0.79 & 33\% & yes \\
Gemma-2 & -- & none & 256,000 & 2.13 & [2.11, 2.15] & -- & 19\% & yes \\
Qwen3.5 & -- & mark-aware & 248,077 & 2.20 & [2.18, 2.23] & 0.80 & 19\% & yes \\
Llama-3 & SmolLM3 & letters & 128,256 & 2.58 & [2.55, 2.61] & 2.19 & 55\% & yes \\
TituLLM & -- & mark-aware & 170,497 & 2.60 & [2.57, 2.63] & 0.79 & 55\% & yes \\
DeepSeek-V3 & DeepSeek-V4 & mark-aware & 128,815 & 2.97 & [2.93, 3.00] & 0.80 & 38\% & yes \\
GLM-5.3 & -- & letters & 154,856 & 4.35 & [4.29, 4.40] & 2.19 & 33\% & yes \\
Qwen2.5 & Qwen3 & letters & 151,665 & 4.42 & [4.36, 4.47] & 2.17 & 32\% & yes \\
Mistral-7B & -- & none & 32,768 & 4.44 & [4.38, 4.50] & -- & 29\% & yes \\
GLM-4.5 & -- & letters & 151,365 & 4.76 & [4.71, 4.82] & 2.19 & 30\% & yes \\
cl100k & Phi-4, OLMo-2, Granite-4.1 & letters & 100,277 & 4.76 & [4.71, 4.82] & 2.19 & 30\% & yes \\
Falcon3 & -- & letters & 131,072 & 5.07 & [5.00, 5.14] & 3.96 & 28\% & yes \\
LFM2 & -- & letters & 64,400 & 5.77 & [5.70, 5.84] & 2.15 & 24\% & yes \\
GPT-2 & -- & letters & 50,257 & 7.55 & [7.46, 7.65] & 3.23 & 19\% & yes \\
\bottomrule
\end{tabular}
\caption{All 24 distinct tokenizers (33 released tokenizers) on FLORES-200 devtest. Aliases: tokenizers whose token ids are identical on every devtest sentence of five languages (Appendix~\ref{app:audit}). Regex: letters-only, mark-aware, other, or none (no regex pre-tokenizer). ne/en: Nepali/English token premium with 95\% paired-bootstrap interval. Floor: pre-token floor under the tokenizer's own regex, a lower bound on the premium of any merge-based extension (letters-only and mark-aware tokenizers only). Mark-init.: share of Nepali tokens whose first character is a combining mark. Lossless: decoding reproduces the input on 200 Nepali sentences. Arkios is from \citet{regmi2026arkios}. Hugging Face repositories are listed in the released code.}
\label{tab:audit-full}
\end{table*}

\paragraph{Broken syllables.} The share of Nepali tokens that begin with a combining mark is 55\% for Llama-3, 32\% for Qwen3 and 30\% for \texttt{cl100k}. Mark-aware tokenizers show similar rates (26\% for \texttt{o200k}, 55\% for TituLLM, which keeps Llama-3's merges), because BPE also splits within words.

\section{Premium decomposition}
\label{app:decomp}

We write the premium as
\begin{equation*}
\frac{|T(\text{ne})|}{|T(\text{en})|} = \underbrace{\frac{|P_{\text{rep}}(\text{ne})|}{|T(\text{en})|}}_{\text{base}} \cdot \underbrace{\frac{|P(\text{ne})|}{|P_{\text{rep}}(\text{ne})|}}_{\text{regex factor}} \cdot \underbrace{\frac{|T(\text{ne})|}{|P(\text{ne})|}}_{\text{vocabulary factor}},
\end{equation*}
where $P$ is the tokenizer's own pre-tokenizer and $P_{\text{rep}}$ is the repaired Llama-3 pattern, applied to every tokenizer so that the numerator of the base term is the same for all. The base lies between $0.77$ and $0.80$ because English token counts barely differ. Using one pattern means that Qwen's regex factor also includes its single-digit splitting. The regex factor is $0.99$--$1.02$ for mark-aware tokenizers, $2.77$--$2.79$ for the Llama-3, Qwen and \texttt{cl100k} pattern, $4.06$ for GPT-2, whose pattern puts every run of marks in a pre-token of its own, and $5.01$ for Falcon3. Falcon3's pipeline splits digits after its byte-level step, and its byte-level alphabet writes byte \texttt{0xBE} as \textthreequarters{}, which Unicode classes as a number, so it cuts the vowel sign \texttt{U+093E} inside its UTF-8 encoding. All floors and factors here count pre-tokens of each tokenizer's full pre-tokenizer pipeline; counting its last regex alone, as our audit script first did, understates Falcon3's floor ($3.21$ instead of $3.96$). The decomposition is an accounting identity and does not attribute cause. The regex factor is the room that a better regex creates, which new merges must then fill.

Table~\ref{tab:decomp} shows the factors. For Llama-3 the vocabulary factor is $1.18$, the lowest in the audit: its 128k vocabulary already nearly covers Nepali pre-tokens. For Qwen and \texttt{cl100k} both factors matter. Qwen3.5's word branch is our repaired class with only the leading-character class left unchanged; \citet{land2026smallbugs} reported its change for Hindi.

\begin{table}[!htb]
\centering
\small
\setlength{\tabcolsep}{4pt}
\begin{tabular}{lrrrr}
\toprule
Tokenizer & Premium & Base & Regex & Vocab. \\
\midrule
GPT-2 & 7.55 & 0.80 & 4.06 & 2.34 \\
LFM2 & 5.77 & 0.78 & 2.77 & 2.68 \\
Falcon3 & 5.07 & 0.79 & 5.01 & 1.28 \\
\texttt{cl100k} (+3) & 4.76 & 0.79 & 2.77 & 2.17 \\
GLM-4.5 & 4.76 & 0.79 & 2.77 & 2.17 \\
Qwen2.5 (+1) & 4.42 & 0.78 & 2.79 & 2.03 \\
GLM-5.3 & 4.35 & 0.79 & 2.77 & 1.98 \\
Llama-3 (+1) & 2.58 & 0.79 & 2.77 & 1.18 \\
\midrule
DeepSeek-V3 (+1) & 2.97 & 0.80 & 1.01 & 3.71 \\
TituLLM & 2.60 & 0.79 & 0.99 & 3.31 \\
Qwen3.5 & 2.20 & 0.78 & 1.02 & 2.76 \\
Mistral-NeMo (+1) & 2.13 & 0.77 & 1.02 & 2.69 \\
Llama-4 & 1.83 & 0.80 & 1.00 & 2.30 \\
\texttt{o200k} (+1) & 1.61 & 0.80 & 1.00 & 2.01 \\
Arkios & 1.12 & 0.79 & 1.01 & 1.41 \\
\bottomrule
\end{tabular}
\caption{Premium = base $\times$ regex factor $\times$ vocabulary factor, on FLORES-200 devtest, for all 15 letters-only and mark-aware tokenizers in the audit (top: letters-only; bottom: mark-aware; ``(+$n$)'' counts aliases). Merge-based extension can only lower the last column, and not below 1.}
\label{tab:decomp}
\end{table}

\section{Cross-script sweep}
\label{app:xling}

\begin{table}[!htb]
\centering
\small
\setlength{\tabcolsep}{3pt}
\begin{tabular}{lr|rrr|rrr}
\toprule
& & \multicolumn{3}{c|}{Llama-3} & \multicolumn{3}{c}{Qwen3} \\
Lang. & Marks & \RO{} & \RI{} & \RII{} & \RO{} & \RI{} & \RII{} \\
\midrule
Hindi & 31\% & 2.52 & 2.38 & 1.17 & 4.42 & 2.36 & 1.16 \\
Bengali & 32\% & 5.84 & 2.29 & 1.01 & 5.03 & 2.26 & 1.00 \\
Tamil & 35\% & 7.65 & 2.67 & 1.02 & 6.11 & 2.64 & 1.02 \\
Telugu & 36\% & 8.29 & 2.45 & 1.02 & 6.99 & 2.43 & 1.01 \\
Sinhala & 28\% & 8.63 & 2.30 & 1.20 & 6.88 & 2.33 & 1.24 \\
Burmese & 50\% & 11.62 & 3.02 & 1.21 & 8.93 & 2.99 & 1.11 \\
Thai$^\dagger$ & 21\% & 2.19 & 1.49 & 0.95 & 2.55 & 1.48 & 0.88 \\
\midrule
Amharic & 0\% & 7.59 & 0.98 & 0.98 & 4.05 & 0.98 & 0.98 \\
Korean & 0\% & 1.49 & 0.98 & 0.98 & 1.63 & 0.98 & 0.98 \\
\bottomrule
\end{tabular}
\caption{Language/English premium on FLORES-200 devtest, $K{=}32{,}000$. Marks: share of NFC code points that are combining marks; Amharic and Korean have none (controls). $^\dagger$Thai is written without spaces, so its pre-tokens are long and \RI{} is not yet at its floor. For \RI{}, merge learning stopped more than 100 merges short of 32{,}000 for Tamil, Telugu, Sinhala, Burmese and Hindi (Llama) (22{,}325--31{,}815 merges), because no further merge was available under the letters-only regex; Korean and Thai on Llama end at most 7 merges short.}
\label{tab:xling}
\end{table}

For each language we used the FineWeb-2 \emph{test} split, normalised to NFC, with documents that share a whitespace 10-gram with that language's FLORES-200 dev or devtest removed (for Thai we also checked 50-character overlaps and found none). The merge-learning text ranges from 25\,MB (Amharic) to 361\,MB (Bengali), less than the 1\,GB used for Nepali, so for languages where \RI{} ran out of merges we cannot fully separate the floor from data scarcity; Sinhala and Amharic \RII{} may also be limited by data.

\paragraph{A relaxed merge filter.} The Nepali filter keeps a new merge only if its output contains a complete Devanagari character. Scripts that the base vocabularies cover poorly have characters split into byte pieces, and this filter would drop the first byte-level merge of each such character and every merge built on it (for Sinhala \RII{} on Llama-3, only 13{,}530 of 32{,}000 merges survived). For this sweep we therefore also keep merges whose output is a byte piece of a target-script character. Such merges can fire on other scripts that share those bytes: the lead byte 0xE0 begins every code point from U+0800 to U+0FFF, which covers all Indic scripts and Thai, and a continuation-byte suffix can end code points in other blocks. They cannot fire on ASCII text, which contains no such bytes, and we verified English token ids on all 1{,}012 sentences for every tokenizer. Proposition~\ref{prop:lossless} does not cover these tokenizers, and we did not measure their effect on other non-ASCII scripts. Hindi shares the Nepali script, and Nepali merges also change Hindi tokenization.

\paragraph{Thai and Burmese.} Both are written without spaces between words, so a repaired pre-token can span several words. In practice no \RII{} Thai- or Burmese-script token spans a space in either language. In Thai, 28\% (Llama-3) and 25\% (Qwen3) of \RII{}'s Thai-script tokens cover more than one dictionary word (PyThaiNLP \texttt{newmm} segmentation), fewer than under \RI{} (33\% and 31\%). In Burmese, \RII{}'s tokens average 2.2 and 1.8 syllables, against 1.3 under \RI{} and 5.1 syllables per space-delimited chunk.

\section{Released extensions}
\label{app:released}

We did not follow a fixed search protocol. Candidates came from the papers in \S\ref{sec:related} and from manual searches of Hugging Face for Llama-3, Qwen and LFM2 derivatives in Indic and Southeast Asian languages. We kept releases whose tokenizer adds merges to a letters-only base. After inspection we excluded five releases that keep the base Llama-3 tokenizer unchanged and three that do not extend a letters-only base. The Llama-3 Hindi extension of Nanda~\citep{choudhury2025nanda} is gated and could not be measured, and we found no public tokenizer for \citet{nag2024efficientcpt} or \citet{purason-etal-2026-teaching}. The Yamaguchi et al.\ extensions re-order the base merges and are not English-invariant (English tokens rise by 6.5\% for Llama-3 Telugu and 7.8\% for Qwen3 Telugu), so Table~\ref{tab:released} compares tokens with pre-tokens of the target text directly. Two releases do not obey the letters-only floor: TituLLM~\citep{nahin-etal-2025-titullms} changes the regex, and SinLlama~\citep{aravinda2025sinllama} registers 11{,}080 strings as added tokens and reaches 0.54 times its letters-only floor.

\begin{table}[!htb]
\centering
\small
\setlength{\tabcolsep}{3pt}
\begin{tabular}{llrrr}
\toprule
Base & Lang. & Tokens/floor & Gap closed & Room \\
\midrule
Llama-3-8B & te & 1.016 & 99.3\% & 3.16$\times$ \\
Llama-3-8B & si & 1.021 & 99.3\% & 2.42$\times$ \\
Llama-3-8B & my & 1.013 & 99.5\% & 5.87$\times$ \\
\midrule
Llama-3.1-8B & ta & 1.018 & 99.1\% & 3.58$\times$ \\
Llama-3.1-8B & bn & 1.020 & 98.7\% & 2.74$\times$ \\
Llama-3.1-8B & gu & 1.026 & 98.9\% & 2.52$\times$ \\
Qwen2.5-7B & si & 1.019 & 99.1\% & 2.40$\times$ \\
Qwen2.5-7B & my & 1.036 & 98.2\% & 5.72$\times$ \\
Qwen3-14B & bn & 1.019 & 98.5\% & 2.71$\times$ \\
Qwen3-14B & te & 1.027 & 98.6\% & 3.11$\times$ \\
\midrule
LFM2 (2.5) & hi & 1.068 & 95.5\% & 2.24$\times$ \\
LFM2 (2.5) & ne & 1.160 & 90.5\% & 2.77$\times$ \\
LFM2 (2.5) & bn & 1.243 & 92.4\% & 2.74$\times$ \\
LFM2 (2.5) & th & 1.819 & 87.9\% & 5.57$\times$ \\
\midrule
Llama-3.1-8B & am & 1.573 & 93.8\% & 1.00$\times$ \\
\bottomrule
\end{tabular}
\caption{Released merge-based extensions that keep the letters-only regex, on FLORES-200 devtest of the target language. Tokens/floor: extended-tokenizer tokens divided by pre-tokens under its own regex (at least 1 by construction). Gap closed: share of the base-to-floor gap that the extension removed. Room: pre-tokens under the letters-only regex divided by pre-tokens under the repaired regex. First block: \citet{yamaguchi2026lowres}; second: \citet{yamaguchi2025elchat}, with the two Qwen3-14B rows from a later release by the same authors; third: LFM2.5~\citep{smith2026inplace}; last: the Amharic release of \citet{yamaguchi2025elchat}, a script with no combining marks (control).}
\label{tab:released}
\end{table}

\section{Additional tokenizer arms}
\label{app:extras}

\paragraph{Regex alone.} Repairing the regex without adding merges (\RII{}, $K{=}0$) gives $2.61$ for Llama-3 (from $2.58$) and $4.42$ for Qwen3. Under the repaired regex the old merges still tokenize Nepali words in old pieces; the repair only removes the bound that new merges would otherwise hit.

\paragraph{Added tokens.} Table~\ref{tab:added} gives the added-token arm of \S\ref{sec:floor}. Default matching reaches almost \RII{}'s compression and the lowest mark-initial rate, but splits many words at arbitrary points: a fifth of the added-token matches start inside a word and more than a quarter end inside one. With whole-word matching, few strings ever match and the premium returns close to the original.

\begin{table}[!htb]
\centering\small
\setlength{\tabcolsep}{3pt}
\begin{tabular}{lrrrr}
\toprule
 & \multicolumn{2}{c}{Premium} & \multicolumn{2}{c}{Mark-initial} \\
\cmidrule(lr){2-3}\cmidrule(lr){4-5}
Tokenizer & Llama-3 & Qwen3 & Llama-3 & Qwen3 \\
\midrule
\RO{} & 2.583 & 4.415 & 55\% & 32\% \\
\RI{}, $K{=}32$k & 2.204 & 2.187 & 65\% & 64\% \\
\RII{}, $K{=}32$k & 1.014 & 1.014 & 13\% & 11\% \\
\RII{} strings, added & 1.025 & 1.009 & 7\% & 7\% \\
\quad whole words only & 2.486 & 4.219 & 54\% & 32\% \\
\midrule
Added-token share & 77\% & 83\% & & \\
Starts mid-word & 21\% & 21\% & & \\
Ends mid-word & 28\% & 28\% & & \\
\bottomrule
\end{tabular}
\caption{Added tokens on the unchanged letters-only regex, FLORES-200 devtest. We register the 31{,}753 of \RII{}'s 32{,}000 new strings that decode to whole Devanagari text (with at most one leading space) as added tokens. Premium: Nepali tokens / English tokens. Mark-initial: share of Nepali tokens that start with a combining mark. The bottom rows describe default matching, which ignores word boundaries: the share of tokens that are added tokens, and the share of added-token matches that start or end inside a word. No sentence without a Devanagari character changes its token ids (191{,}460 FLORES-200 sentences, both models, both matching modes).}
\label{tab:added}
\end{table}

\paragraph{Broken syllables.} A token that starts with a combining mark has certainly split a syllable, but the converse does not hold: a token can also end inside an akshara, for example just before a virama, so these shares are a lower bound on split syllables. Counting only tokens that contain Devanagari, the share that begin with a combining mark rises from 57\% (Llama-3) and 33\% (Qwen3) to 67\% for both under \RI{}, and falls to 15\% and 12\% under \RII{}. The letters-only regex leaves few distinct Nepali pre-tokens, so 5k to 30k merges cover nearly all frequent ones, which is why \RI{} saturates early.

\begin{figure}[!htb]
\centering
\includegraphics[width=0.95\columnwidth]{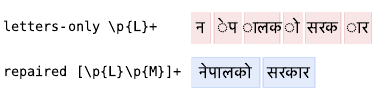}
\caption{Pre-tokens of \emph{nep\=alko sark\=ar} (``government of Nepal'') under the letters-only regex of Llama-3 and Qwen and under the repaired regex. We draw a dotted circle where a pre-token starts with a vowel sign; it is not a duplicated character.}
\label{fig:split}
\end{figure}

\section{All continued-pretraining runs}
\label{app:cptfull}

\begin{table*}[t]
\centering\footnotesize
\setlength{\tabcolsep}{3.5pt}
\begin{tabular}{lllrrrrrrrrr}
\toprule
 & & & & \multicolumn{2}{c}{held-out ne BPB$\downarrow$} & en BPB$\downarrow$ & FLORES & Belebele & \multicolumn{2}{c}{chrF++} & Gen. \\
\cmidrule(lr){5-6}\cmidrule(lr){10-11}
Model & Runs & Arm & Tokens & reg. & sep. & reg. & ne BPB & ne acc. & ne$\to$en & en$\to$ne & B/s \\
\midrule
Llama-3.2-1B & none & base & 0 & 0.766 & n/a & 0.788 & 0.863 & 0.277 & 32.4 & 10.8 & 5327 \\
 & main & \RO{} & 417M & 0.474 & 0.471 & 0.799 & 0.611 & 0.287 & 32.3 & 19.8 & 5147 \\
 &  & \RI{} & 386M & 0.534 & 0.483 & 0.803 & 0.648 & 0.268 & 27.2 & 10.4 & 6354 \\
 &  & \RII{} & 286M & 0.539 & 0.510 & 0.799 & 0.671 & 0.266 & 27.8 & 12.3 & 9398 \\
\addlinespace
 & warm-up & \RO{} & 417M & 0.779 & 0.471 & 0.850 & 0.864 & 0.279 & 8.8 & 9.1 & 5349 \\
 &  & \RI{} & 386M & 0.808 & 0.480 & 0.854 & 0.904 & 0.271 & 9.9 & 4.2 & 7521 \\
 &  & \RII{} & 286M & 0.527 & 0.501 & 0.840 & 0.652 & 0.283 & 16.2 & 15.4 & 11078 \\
\addlinespace
 & 1\,GB, BOS & \RO{} & 418M & 0.471 & 1.017 & 0.799 & 0.607 & -- & -- & -- & -- \\
 & in training & \RI{} & 386M & 0.483 & 0.875 & 0.800 & 0.617 & -- & -- & -- & -- \\
 &  & \RII{} & 286M & 0.509 & 1.666 & 0.798 & 0.626 & -- & -- & -- & -- \\
\addlinespace
\midrule
Qwen3-0.6B & none & base & 0 & 0.852 & 0.852 & 0.873 & 0.945 & 0.290 & 26.0 & 6.8 & 2186 \\
 & main & \RO{} & 574M & 0.501 & 0.501 & 0.864 & 0.631 & 0.270 & 20.1 & 13.6 & 3089 \\
 &  & \RI{} & 392M & 0.561 & 0.561 & 0.864 & 0.689 & 0.253 & 11.6 & 6.9 & 4972 \\
 &  & \RII{} & 292M & 0.570 & 0.570 & 0.862 & 0.688 & 0.266 & 17.0 & 3.9 & 7454 \\
\addlinespace
\bottomrule
\end{tabular}
\caption{Main, warm-up and BOS rerun (seed 0, learning rate $10^{-4}$), with secondary metrics; second seeds, matched-steps and learning-rate runs are in Table~\ref{tab:extra-runs}. Main runs: 1\,GB of Nepali and 1\,GB of English, documents separated by the end-of-text token only. Warm-up runs: the same, after training only the embedding matrix on the first 10\% of the stream (Tokens excludes that stage). Held-out BPB is scored in pieces of about 2{,}000 bytes, after the registered prefix (reg.: beginning-of-sequence token for Llama) or the training separator (sep.: end-of-text token); for Qwen the two coincide. The base Llama model was never trained without its beginning-of-sequence token, so its separator score is not meaningful (n/a). All columns other than sep.\ use the registered prefix; for the Llama warm-up runs they are dominated by the scoring artifact of \S\ref{sec:cpt} and should not be compared with the main runs. 1\,GB, BOS in training: the BOS-consistent rerun, i.e.\ the main runs repeated with every training document wrapped as BOS, document, end-of-text (third amendment); its sep.\ column is scored after the end-of-text token, which these models never saw directly before text. Belebele, translation and generation speed were not run for it (--). Belebele: zero-shot accuracy on 900 items (chance 0.25); every value lies between 0.25 and 0.29 with 95\% intervals of about $\pm$0.03, so it cannot separate the arms. chrF++: 5-shot FLORES-200 devtest translation, single greedy pass, no intervals, keeping only the first line of output (empty if the model starts with a newline), so we draw no conclusions from it. Gen.\ B/s: UTF-8 bytes of Nepali generated per second, measured on GPUs shared with concurrent training runs; it is noisy (the base and \RO{} Qwen models share tokenizer and architecture yet differ by 41\%) and we do not draw conclusions from it.}
\label{tab:cpt}
\end{table*}

\paragraph{Equal compute and training time.} At \RII{}'s final step count, the \RO{} checkpoint of the main runs, whose learning rate had not yet decayed, is already better than the final \RII{} model by $0.060$ $[0.056, 0.063]$ on Llama and $0.053$ $[0.049, 0.057]$ on Qwen. The \RI{} checkpoint at that step is better by $0.006$ $[0.003, 0.008]$ on Llama and level on Qwen ($0.0000$ $[-0.0021, +0.0020]$; final \RI{} is better than final \RII{} by $0.009$ on the same pieces, so on Qwen all of \RI{}'s advantage arrives in its last 192 steps, over which its learning rate falls the last 16\% of the way to its floor). These are scored on the first 1\,MB of held-out text with the registered prefix, so the Llama values carry the BOS artifact of \S\ref{sec:cpt}; no such checkpoints were scored for the rerun. Measured training time fell by only 3\% (Llama) and 11\% (Qwen) relative to \RI{}, in runs that shared GPUs with evaluation jobs.

\paragraph{Warm-up runs.} Training only the embedding matrix first leaves \RO{} where it was ($0.471$ after the training separator) and improves \RII{} from $0.510$ to $0.501$. \RII{}'s deficit relative to \RO{} shrinks from $0.039$ to $0.031$, as the second amendment predicted, and its gap to \RI{} from $0.027$ to $0.021$ $[0.020, 0.022]$. These are differences between single runs, so we treat them as suggestive. The amendment stated its prediction for the registered prefix, where the warm-up makes Llama scores unusable: \RO{} moves from $0.474$ to $0.779$. With tied embeddings, the BOS row is updated as an output class even though it never appears as input, and in the warm-up runs its norm grows from $0.863$ to $0.896$; this may contribute.

\paragraph{English.} In the main Llama runs under the registered prefix, \RII{} $-$ \RO{} on English is $+0.0003$ $[-0.0002, +0.0008]$ (non-inferior). All three main-run arms are slightly worse on English than the base model ($0.799$ to $0.803$ against $0.788$).

\paragraph{Other BPB views.} The registered sliding-window BPB, which scores whole documents, orders the Nepali arms the same way (Llama rerun $0.424$, $0.437$, $0.464$; Qwen $0.459$, $0.515$, $0.525$). In the BOS-consistent rerun, FLORES Nepali BPB is $0.607$, $0.617$ and $0.626$ for \RO{}, \RI{} and \RII{}; the rerun's three held-out differences match the separator diagnostic of the main runs ($+0.0121$, $+0.0394$, $+0.0273$), and \RII{} $-$ \RI{} lies between the two earlier values, as the third amendment predicted. At $10^{-4}$, every Qwen arm ends better on English than its base model ($0.862$ to $0.864$ against $0.873$).

\paragraph{A failed prediction of the third amendment.} The third amendment also predicted that scores after BOS and after the end-of-text token would agree within $0.005$. This failed: after the end-of-text token the rerun models score $1.017$, $0.875$ and $1.666$. Score a model after the context it was trained with.

\paragraph{English gains.} \RII{} is better than \RI{} on English by $0.0022$ on both models although English tokens are identical; a likely reason is that English is 74\% of \RII{}'s training tokens but 55\% of \RI{}'s, so it weighs more in \RII{}'s loss.

\paragraph{BOS diagnostic.} On 40 held-out Nepali documents (first 1{,}500 characters), Llama BPB after BOS and after the end-of-text token are 0.720 and 1.865 for the base model, 0.441 and 0.438 for \RO{} of the main runs, and 0.758 and 0.438 for \RO{} of the warm-up runs. The norm of the BOS embedding is 0.863, 0.860 and 0.896 respectively (mean row norm 0.98--0.99). The base model, pretrained with BOS at sequence starts, scores far worse without it, while the continued-pretraining models, trained on fixed-length windows that start at arbitrary tokens, do not need it. The BOS-consistent rerun shows the converse: trained with BOS before every document, its models score far worse after the end-of-text token (Table~\ref{tab:cpt}, sep.\ column). We did not test the mechanism further.

\paragraph{Secondary metrics.} Belebele is near chance for every model. Translation scores are erratic: Qwen \RII{} scores 3.9 chrF++ from English to Nepali, below the untrained base model (6.8). We take the first output line as the hypothesis, which is empty when a model starts with a newline, and we did not measure how often that happens. Qwen's FLORES BPB is almost the same for \RI{} and \RII{} (0.689 and 0.688), although \RII{} is worse on held-out native text. We draw no conclusions from these metrics.

\paragraph{Measured generation speed.} Generation was timed once per model, on GPUs that were also running training or evaluation jobs. The Qwen base model and Qwen \RO{} share tokenizer and architecture, yet differ by 41\% in bytes per second, and the generated text has fewer bytes per token than FLORES (8.1 against 12.3 for Llama \RII{}), so the measured speed also depends on what each model chose to write. With these caveats, \RII{} generates $1.48$ (Llama) and $1.50$ (Qwen) times as many bytes per second as \RI{}, and its generations carry $8.07$ and $7.47$ bytes per token against $5.21$ and $5.07$ for \RI{}. Section~\ref{sec:cpt} reports these together with the deterministic ratio of decoding steps on reference text.

\section{What the repair does not fix}
\label{app:notfix}
A wider word class removes the splits that the regex forces, but BPE can still split inside a syllable: 26\% of \texttt{o200k}'s Nepali tokens and 11 to 13\% of \RII{}'s start with a combining mark. An akshara-aware rule~\citep{kapila2026typedriven,darshana2026wwho} restricted to the Devanagari block would keep English token ids by the argument of Proposition~\ref{prop:lossless}; testing one in a deployed model is future work. Added tokens are a second route past the floor that needs no regex change; they split more words mid-way (\S\ref{sec:floor}), but we have not measured their effect on model quality, and our training runs give no evidence that segmentation of this kind matters for it.

\section{All runs of the later amendments}
\label{app:extra-runs}
Table~\ref{tab:extra-runs} lists every run of the Llama BOS setting and of Qwen. Matched-steps Llama \RII{} is slightly worse on English than the other Llama arms ($0.804$ against $0.798$ to $0.800$), possibly because it sees part of its stream twice.

\begin{table}[!htb]
\centering\small
\setlength{\tabcolsep}{2pt}
\begin{tabular}{lllrrrr}
\toprule
Model & Arm & LR & Seed & Steps & ne BPB & en BPB \\
\midrule
Llama & \RO{} & $10^{-4}$ & 0 & 797 & 0.4709 & 0.7985 \\
Llama & \RO{} & $10^{-4}$ & 1 & 797 & 0.4709 & 0.7984 \\
Llama & \RI{} & $10^{-4}$ & 0 & 737 & 0.4833 & 0.7998 \\
Llama & \RI{} & $10^{-4}$ & 1 & 737 & 0.4832 & 0.7993 \\
Llama & \RII{} & $10^{-4}$ & 0 & 546 & 0.5089 & 0.7975 \\
Llama & \RII{} & $10^{-4}$ & 1 & 546 & 0.5080 & 0.7974 \\
Llama & \RII{}-m & $10^{-4}$ & 0 & 737 & 0.4957 & 0.8044 \\
Llama & \RII{}-m & $10^{-4}$ & 1 & 737 & 0.4948 & 0.8043 \\
\midrule
Qwen & \RO{} & $10^{-4}$ & 0 & 1095 & 0.5007 & 0.8643 \\
Qwen & \RO{} & $10^{-4}$ & 1 & 1095 & 0.5003 & 0.8631 \\
Qwen & \RI{} & $10^{-4}$ & 0 & 748 & 0.5609 & 0.8637 \\
Qwen & \RI{} & $10^{-4}$ & 1 & 748 & 0.5578 & 0.8638 \\
Qwen & \RII{} & $10^{-4}$ & 0 & 556 & 0.5702 & 0.8615 \\
Qwen & \RII{} & $10^{-4}$ & 1 & 556 & 0.5729 & 0.8617 \\
Qwen & \RII{}-m & $10^{-4}$ & 0 & 748 & 0.5491 & 0.8628 \\
Qwen & \RII{}-m & $10^{-4}$ & 1 & 748 & 0.5499 & 0.8626 \\
Qwen & \RO{} & $3{\times}10^{-4}$ & 0 & 1095 & 0.4794 & 0.8789 \\
Qwen & \RI{} & $3{\times}10^{-4}$ & 0 & 748 & 0.5092 & 0.8748 \\
Qwen & \RII{}-m & $3{\times}10^{-4}$ & 0 & 748 & 0.5035 & 0.8751 \\
\bottomrule
\end{tabular}
\caption{Every continued-pretraining run of the Llama BOS setting and of Qwen, including the matched-steps, second-seed and learning-rate runs of the fourth to seventh amendments. BPB: held-out text in pieces of about 2{,}000 bytes after the registered prefix. \RII{}-m: \RII{} trained for \RI{}'s step count, with 35\% of its stream seen twice. Seeds change data order (and bf16 non-determinism) only.}
\label{tab:extra-runs}
\end{table}

\end{document}